\title{Data-Dependent Stability Analysis of Adversarial Training}
\author{%
  Yihan Wang$^{1, 2}$, Shuang Liu$^{1, 2}$, Xiao-Shan Gao$^{1, 2}$
    \thanks{Corresponding author.}}
\def\B{{\mathcal{B}}}
\def\R{{\mathbb{R}}}
\def\I{{\mathbb{I}}}
\def\N{{\mathbb{N}}}
\def\E{\mathop{\mathbb{E}}}
\def\D{{\mathbb{D}}}
\def\A{{\mathcal{A}}}
\def\Dd{{\mathcal{D}}}
\def\O{{\mathcal{O}}}
\def\P{{\mathcal{P}}}
\def\G{{\mathcal{G}}}
\def\AR{{\mathcal{R}}}
\def\gen{\hbox{\scriptsize\rm{gen}}}
\def\opt{\hbox{\scriptsize\rm{opt}}}
\newtheorem{theorem}{Theorem}
\newtheorem{proposition}[theorem]{proposition}
\newtheorem{lemma}[theorem]{Lemma}
\newtheorem{corollary}[theorem]{corollary}
\newtheorem{remark}[theorem]{Remark}
\newtheorem{definition}[theorem]{Definition}
\newtheorem{assumption}[theorem]{Assumption}
\begin{document}

\maketitle

\begin{abstract}

Stability analysis is an essential aspect of studying the generalization ability of deep learning, as it involves deriving generalization bounds for stochastic gradient descent-based training algorithms. Adversarial training is the most widely used defense against adversarial example attacks. However, previous generalization bounds for adversarial training have not included information regarding the data distribution.
In this paper, we fill this gap by providing generalization bounds for stochastic gradient descent-based adversarial training that incorporate data distribution information. We utilize the concepts of on-average stability and high-order approximate Lipschitz conditions to examine how changes in data distribution and adversarial budget can affect robust generalization gaps. Our derived generalization bounds for both convex and non-convex losses are at least as good as the uniform stability-based counterparts which do not include data distribution information. Furthermore, our findings demonstrate how distribution shifts from data poisoning attacks can impact robust generalization.
\end{abstract}

\section{Introduction}
Deep learning models acquire knowledge from training data and generalize to unseen data. 
Generalization plays a key role in successful machine learning algorithms.
On the other hand, a neural network can be easily fooled by adversarial examples  \cite{szegedy2013intriguing, goodfellow2014explaining}.
Though adversarial training   \cite{madry2017towards} can largely alleviate the adversarial vulnerability of networks,
the corresponding robust generalization is more difficult and 
robust overfitting  \cite{rice2020overfitting} harms the robust performance to a very large degree. 
To understand the generalization ability of adversarial training, an important research direction is to give a theoretical analysis of its generalization bounds.

Algorithmic stability  \cite{bousquet2002stability, shalev2010learnability} can derive the generalization bounds.
In standard training, 
the uniform stability of stochastic gradient descent (SGD) was studied by \citet{hardt2016train},
assuming the loss function is $L$-Lipschitz and $\beta$-gradient Lipschitz.
\citet{bassily2020stability} extended the results to non-smooth convex losses.
\citet{kuzborskij2018data} employed on-average stability and provided data-dependent generalization bounds.
In adversarial training,
the stability of minimax problems is discussed by  \citet{farnia2021train}, 
assuming the inner maximization problem is strongly concave.
\citet{xing2021algorithmic} considered the uniform stability of adversarial training on non-smooth losses.
Under the $\eta$-approximate $\beta$-gradient Lipschitz assumption,
\citet{xiao2022stability} derived generalization bounds for SGD in adversarial training,  which involved the adversarial training budget.

It is generally believed that the difficulty of robust generalization involves three aspects including model capacity, training algorithm, and data distribution.
The capacity of a strictly robust classifier on a well-separated distribution should be exponential in the data dimension \cite{li2022robust}.
The previous generalization bounds based on uniform stability analyses \cite{xing2021algorithmic, xiao2022stability} of adversarial training algorithms did not contain information about data distribution. 
%

In this paper,
we analyze the on-average stability of SGD-based adversarial training and derive data-dependent generalization bounds to illustrate robust generalization, that is, the generalization bounds contain information of data distribution. 
For the convex adversarial losses, 
assuming the losses are Lipschitz and approximately gradient Lipschitz,
we give a generalization bound dependent on the adversarial population risk at the initialization point and the variance of stochastic gradients over the distribution.
Assuming the losses are approximately Hessian Lipschitz in addition,
we provide a generalization bound for the non-convex adversarial losses.
Besides the variance of stochastic gradients over the distribution,
this bound depends on the curvature (the norm of the Hessian matrix) at the initialization point and the population risk at the output parameters.
Our bounds grow with the adversarial training  
budget and cover the standard training setting when the budget becomes zero.
Our bounds for both convex and non-convex losses are no worse than the uniform stability-based counterparts but capture the information about the data distribution and the initialization point.

An additional advantage of our generalization bound over the previous ones is that it describes the effects of distribution shifts caused by data poisoning attacks and hence interprets the shrinkage of generalization gaps in adversarial training under stability attacks since the poisoned distributions can reduce the adversarial population risk over the poisoned data.

The rest of the paper is organized as follows.
In Section \ref{sec:stability-and-generalization},
we revisit the relationship between stability and robust generalization for adversarial training.
In Section \ref{sec:data-dependent-stability-of-at}, 
we provide our main results. 
In Section \ref{sec:experiments}, 
we present experimental results to verify the theoretical results.
All proofs are deferred to Appendix \ref{app:proofs}.

\section{Related Works}\label{sec:related-works}
\textbf{Robust Generalization.}
Machine learning models are highly vulnerable to adversarial examples  \cite{szegedy2013intriguing, biggio2013evasion, nguyen2015deep, moosavi2016deepfool}, where crafted and imperceptible perturbations to input data can easily fool a well-trained classifier.
A widely adopted illustration attributes adversarial examples to the presence of non-robust features \cite{ilyas2019adversarial}. 
Among numerous proposed defenses against adversarial attacks,
adversarial training    \cite{goodfellow2014explaining,shaham2015understanding, madry2017towards} has become a major approach to training a robust deep neural network and can achieve optimal robust accuracy if certain loss functions are used \cite{gao2002StGame}.

The generalization in adversarial training is much more tricky 
than that in standard training and requires more data and larger models \cite{schmidt2018adversarially, gowal2021improving, li2022robust,wang2023better}. 
The robust overfitting  \cite{rice2020overfitting} phenomenon 
harms the robustness in a long training procedure.
In recent years, different methods  \cite{chen2020robust, wu2020adversarial, yu2022understanding, chen2022sparsity} have been proposed to alleviate robust overfitting.

%
%
\textbf{Algorithmic Stability.}
Modern stability analysis goes back to the work of  \citet{bousquet2002stability}.
Notions of stability fall into two categories: data-free and data-dependent ones.
The first category is usually called uniform stability.
Generalization bounds of SGD   have been analyzed using uniform stability 
under Lipschitz and smoothness conditions by  \citet{hardt2016train}.
\citet{bassily2020stability} extended the results to the non-smooth convex case. 
\citet{farnia2021train} studied the role of optimization algorithm in the generalization performance of the minimax model.
The uniform stability of adversarial training  
has been reported by  \citet{ xing2021algorithmic, xiao2022stability}.
The data-dependent stability  \cite{kuzborskij2018data} employing the notion of  on-average stability  \cite{shalev2010learnability} focused on the stability of the SGD-based standard training
under the data distribution given an initialization point.
%

\textbf{Data Poisoning.}
As defensive strategies against unauthorized exploitation of personal data,
availability attacks \cite{huang2021unlearnable, fowl2021adversarial, feng2019learning,  liu2021going, ren2022transferable} perturb the training data imperceptibly 
such that the trained models learn nothing useful and become futile.
Adversarial training can mitigate such availability attacks  \cite{tao2021better}.
Stability attacks
\cite{tao2022can, fu2022robust, wang2021fooling}
have been proposed to come through adversarial training and result in a large degradation in the robust test performance.
%
%
The shortcut interpretation \cite{yu2022availability} suggests that stability poisoning attacks root "easy-to-learn" features in the poisoned training data. However, these features do not appear in clean data.
Our generalization bound can be used to interpret the shrinkage of generalization gaps in adversarial training under stability attacks,
since it contains information of data distribution.

\section{Preliminaries}
\label{sec:stability-and-generalization}
In this section, we revisit the robust generalization gap and the on-average stability analysis. 

\subsection{Robust Generalization Gap}\label{sec:preliminaries}
Let  $\Dd$ be a data distribution over an image classification data space $\D=[0,1]^d\times[m]$, where $[0,1]^d$ contains the image space and $[m]=\{1,\ldots,m\}$ is the label set.
A data set $S$ of $n$ samples is  drawn i.i.d. from $\Dd$ and is denoted by $S\sim \Dd^n$. 
Given a network with parameter $\theta$ and a non-negative loss function $l(\theta, z):\R^k\times \D\to \R_{\ge0}$,  the standard training minimizes the {\em empirical risk}
$\E_{z\in S}\,l(\theta, z)$ with SGD.
 
\textbf{Adversarial Training.}
As a major defense approach,
adversarial training  \cite{madry2017towards} 
refers to a bi-level optimization, 
of which the inner maximization iteratively searches for 
the strongest perturbation inside a $L_p$-norm ball and the outer minimization optimizes the model via the loss on the perturbed data.
Formally, given an adversarial budget $\epsilon$,
the adversarial training uses the {\em adversarial  loss}:
\begin{align*}
    h(\theta, z) = \max_{z' \in \B_\epsilon(z)} l(z',\theta),
\end{align*}
where $\B_\epsilon(z)=\{z'\in \D:||z'-z||_p\leq \epsilon \}$
and $p\in\N\cup\{\infty\}$.
Here the $p$-norm is for the image part of $z$.
When $\epsilon=0$, we have $h=l$ and adversarial training reduces to standard training.
The {\em adversarial population risk} and {\em adversarial empirical risk}
are respectively defined as 
\begin{align*}
    \AR_\Dd (\theta)=\E_{z\sim \Dd}[ h(\theta, z)] 
    \text{\ \ \  and\ \ \  }
    \AR_S(\theta)=\E_{z\in S}[ h(\theta, z)].
\end{align*}
We denote the SGD algorithm of adversarial training  by $\A$,
which inputs a training set $S$ 
and outputs the parameter $\A(S)$ of a network through
minimizing the adversarial empirical risk $\AR_S$.
%
 
\textbf{Robust Generalization Gap.}
Let $\theta^*,\Bar{\theta}$ be the optimal solutions of learning over $\Dd$ and $S$, namely minimizing $\AR_\Dd (\theta)$ and $\AR_S(\theta)$, respectively. 
Then, for the output
$\hat{\theta}=\A(S)$ of algorithm $\A$, the excess risk can be decomposed as 
\begin{align*}
    \AR_\Dd(\hat{\theta}) - \AR_\Dd(\theta^*)
    &= \underbrace{\AR_\Dd(\hat{\theta})-\AR_S(\hat{\theta})}_{\varepsilon_{\gen}}
    + \underbrace{\AR_S(\hat{\theta}) - \AR_S(\Bar{\theta})}_{\varepsilon_{\opt}}\\
    +& \underbrace{\AR_S(\Bar{\theta}) - \AR_S(\theta^*)}_{\leq 0}
    + \underbrace{\AR_S(\theta^*) - \AR_\Dd(\theta^*)}_{\E = 0}.
\end{align*}
To control the excess risk, 
we need to control the {\em robust generalization gap} $\varepsilon_{\gen}$ and the {\em robust optimization gap} $\varepsilon_{\opt}$.
The robust optimization gap in adversarial training   
has been studied a lot theoretically  \cite{nemirovski2009robust,xiao2022stability}. 
Also, empirical results  \cite{madry2017towards, zhang2019theoretically, wang2019improving, wu2020adversarial} present narrow robust optimization gaps.
 
On the other hand, robust overfitting  \cite{rice2020overfitting}
is a dominant phenomenon in adversarial training   
that hinders deep neural networks from attaining high robust performance. 
Hence, we focus on the robust generalization gap $\varepsilon_{\gen}$ in this paper.






\subsection{On-Average Stability}
In order to analyze the data-dependent stability,
we employ the notion of on-average stability.
Given a data set $S=\{z_1,\cdots,z_n\}\sim \Dd^n$ 
and    $z\sim \Dd$, 
replacing $z_i$ in $S$ with $z$,
we denote $S^{i,z}=\{z_1,\cdots, z_{i-1},z,z_{i+1},\cdots, z_n \}$ with $i\in [n]$.
\begin{definition}[On-Average Stability]\label{def:on-average-stability}
    A randomized algorithm $\A$ is {\em $\varepsilon$-on-average stable} if 
    \begin{align}
        \sup_{i\in [n]} \E_{S,z,\A}[h(\A(S),z) - h(\A(S^{i,z}),z)]\leq \varepsilon,\label{equ:on-average-stability}
    \end{align}
    where $S\sim \Dd^n$, $z\sim \Dd$, and $\varepsilon$ can depend on the data distribution $\Dd$ and the initialization point of $\A$.
\end{definition}
The on-average stability considers the expected difference between the losses of algorithm outputs on $S$ and its replace-one-example version for all replacement index $i$. 
The on-average stability derives the upper bound of the generalization gap as follows.

\begin{theorem}[\citet{kuzborskij2018data}]
\label{thm-KL}
{If $\A$ is $\varepsilon$-on-average  stable, then the robust generalization gap of $\A$ is bounded by $\varepsilon$:}
    \begin{align*}
        \E_{S,\A}[\AR_\Dd(\A(S)) - \AR_S(\A(S))]\leq \varepsilon.
    \end{align*}
\end{theorem}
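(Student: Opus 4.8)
The plan is to exploit the i.i.d.\ structure of $S$ together with a relabeling (exchangeability) argument to rewrite both the population risk and the empirical risk, so that the generalization gap collapses into an average of exactly the stability terms appearing in Definition~\ref{def:on-average-stability}.

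First I would unfold the population risk by introducing a fresh test point $z\sim\Dd$ drawn independently of $S$. Since $\AR_\Dd(\theta)=\E_{z\sim\Dd}[h(\theta,z)]$, linearity of expectation gives
\begin{align*}
\E_{S,\A}[\AR_\Dd(\A(S))] = \E_{S,z,\A}[h(\A(S),z)].
\end{align*}
Similarly, expanding the empirical risk as $\AR_S(\theta)=\frac1n\sum_{i=1}^n h(\theta,z_i)$ yields
\begin{align*}
\E_{S,\A}[\AR_S(\A(S))] = \frac1n\sum_{i=1}^n \E_{S,\A}[h(\A(S),z_i)].
\end{align*}

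The key step is to recognize each empirical summand as a ``plug-in'' version of the perturbed population term. Fix $i\in[n]$. Because $z$ and $z_i$ are i.i.d.\ and independent of the remaining coordinates, the dataset $S^{i,z}$, which places the fresh sample $z$ at position $i$, has exactly the same distribution as $S$, and under this coupling $z$ plays the role of the $i$-th training example. Relabeling the dummy variables therefore gives the identity
\begin{align*}
\E_{S,z,\A}[h(\A(S^{i,z}),z)] = \E_{S,\A}[h(\A(S),z_i)],
\end{align*}
which lets me replace each empirical term by its perturbed counterpart. Combining the two rewrites and using that the population term is independent of $i$ so that it may be written as an average over $i$, the generalization gap becomes
\begin{align*}
\E_{S,\A}[\AR_\Dd(\A(S))-\AR_S(\A(S))] = \frac1n\sum_{i=1}^n \E_{S,z,\A}[h(\A(S),z)-h(\A(S^{i,z}),z)].
\end{align*}

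Each summand is at most the supremum over $i$ of the stability quantity in \eqref{equ:on-average-stability}, which is bounded by $\varepsilon$; hence the average is also at most $\varepsilon$, finishing the proof. The main obstacle is justifying the exchangeability identity cleanly: one must check that the measure-preserving swap of the $i$-th coordinate with the independent sample $z$ is valid and that the internal randomness of $\A$ is coupled identically on both sides, which is precisely where the i.i.d.\ assumption on $S$ and the independence of the test point $z$ from $S$ are used.
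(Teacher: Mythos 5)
Your proposal is correct, and it is exactly the standard symmetrization argument behind this result: the paper itself states Theorem~\ref{thm-KL} as a citation to \citet{kuzborskij2018data} without reproducing a proof, and the cited proof proceeds precisely by writing $\E_{S,\A}[\AR_S(\A(S))]=\frac1n\sum_i\E_{S,\A}[h(\A(S),z_i)]$, using the i.i.d.\ exchange of $z$ and $z_i$ (with the algorithm's randomness, here the permutation $\pi$, independent of the data) to identify each summand with $\E_{S,z,\A}[h(\A(S^{i,z}),z)]$, and bounding the resulting average by the supremum over $i$ in Definition~\ref{def:on-average-stability}. No gaps.
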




\section{Theoretical Results}\label{sec:data-dependent-stability-of-at}
In this section, we give the data-dependent stability analysis of adversarial training   
for both convex and non-convex adversarial losses.
We provide proof sketches of our results and the full proofs are placed in Appendix \ref{app:proofs}.
\subsection{Lipschitz Conditions}\label{subsec:lip-conditions}
Stability analysis always relies on some Lipschitz conditions.
The loss function is assumed to be $L$-Lipschitz and $\beta$-gradient Lipschitz, i.e. $\beta$-smooth in the work  \cite{hardt2016train}.
For  adversarial training, 
we need the adversarial loss $h(\theta,z)$ to satisfy some Lipschitz conditions.
It is not reasonable to directly endow $h$ with Lipschitz conditions,
since $h(\theta, z)$ takes the maximum of $l(\theta, z')$ with $z'\in \B_\epsilon(z)$. 
Instead, we assume that the original loss function $l(\theta, z)$ satisfies the following Lipschitz conditions.
%
%
 %
Let $||\cdot||_p$ be the $p$-norm of vectors or matrices and we write $||\cdot||$ instead of $||\cdot||_2$ for brevity.
In this paper, $\nabla$ is the abbreviation for $\nabla_\theta$.
 \begin{assumption}\label{ass:l-lip} The loss $l$ is $L$-Lipschitz in $\theta$:
            \begin{align*}
            ||l(\theta_1, z)-l(\theta_2, z)|| &\leq L ||\theta_1 - \theta_2||.
            \end{align*}
       \end{assumption} 
\begin{assumption}\label{ass:l-gradient-lip} The loss $l$ is $L_\theta$-gradient Lipschitz in $\theta$ and $L_z$-gradient Lipschitz in $z$:
            \begin{align*}
            ||\nabla l(\theta_1, z) - \nabla l(\theta_2, z)|| & \leq L_\theta ||\theta_1 - \theta_2||,\\
            ||\nabla l(\theta, z_1) - \nabla l(\theta, z_2)|| & \leq L_z ||z_1 - z_2||_p.
            \end{align*}
\end{assumption}        
\begin{assumption}\label{ass:l-lip-hessian} The loss $l$ is $H_\theta$-Hessian Lipschitz in $\theta$ and $H_z$-Hessian Lipschitz in $z$:
            \begin{align*}
             ||\nabla^2 l(\theta_1,z) - \nabla^2 l(\theta_2, z)|| &\leq H_\theta ||\theta_1-\theta_2||,\\
             ||\nabla^2 l(\theta,z_1) - \nabla^2 l(\theta, z_2)|| &\leq H_z ||z_1-z_2||_p.\\
            \end{align*}
\end{assumption}
\begin{remark}
For commonly used losses and ReLU-based networks,  Assumption \ref{ass:l-lip} is valid \cite{gao2002StGame}.
The gradient Lipschitz conditions (Lipschitz smoothness) are often used in robustness analysis \cite{sinha2017certifying,liu2020loss, xiao2022stability}. 
Lipschitz Hessians are used in the analysis of SGD \cite{ge2015escaping, kuzborskij2018data}.
Assumptions \ref{ass:l-gradient-lip} and \ref{ass:l-lip-hessian} are valid for networks based on smooth activation functions such as Sigmoid and smooth loss functions such as cross-entropy (CE) and mean squared error (MSE);
related works on ReLU-based networks were given in \cite{allen2019convergence, du2019gradient}.

\end{remark}

Note that the adversarial vulnerability of deep networks is rooted in the explosion of the Lipschitz constant of $l(\theta, z)$ in $z$.
However, the zero-order Lipschitz constant in $\theta$ can be directly inherited by $h(\theta,z)$ \cite{liu2020loss}.
Additional Lipschitz conditions in $z$ imply approximate gradient and Hessian Lipschitz conditions in $\theta$ which are needed for stability analysis.

{
\begin{definition}
    Let $\eta, \beta, \nu, \rho > 0$ and $h(\theta)$ be a second-order differentiable function.
    \begin{enumerate}
        \item $h$ is $\eta$-approximately $\beta$-gradient Lipschitz, if
        \begin{align*}
            ||\nabla h(\theta_1)-\nabla h(\theta_2)|| \leq \beta ||\theta_1 - \theta_2|| + \eta.
        \end{align*}
        \item $h$ is $\nu$-approximately $\rho$-Hessian Lipschitz, if 
        \begin{align*}
            ||\nabla^2 h(\theta_1)-\nabla^2 h(\theta_2)|| \leq \nu ||\theta_1 - \theta_2|| + \rho.
        \end{align*}
    \end{enumerate}
\end{definition}
}

\begin{lemma} \label{lem:h-lip-conditions}
    The adversarial loss $h(\theta, z)$ inherits (approximate)
    Lipschitz   properties from the original loss $l(\theta,z)$. 
    \begin{enumerate}
    
        \item  Under Assumption \ref{ass:l-lip}, $h$ is $L$-Lipschitz with respect to $\theta$:
            \begin{align*}
                ||h(\theta_1, z)-h(\theta_2, z)|| &\leq L ||\theta_1 - \theta_2||.
            \end{align*}
            
        \item  Under  Assumption \ref{ass:l-gradient-lip}, $h$ is $2\epsilon L_z$-{\em approximately $L_\theta$-gradient Lipschitz} with respect to $\theta$:
            \begin{align*}
                ||\nabla h(\theta_1, z)- \nabla h(\theta_2, z)||
                &\leq L_\theta||\theta_1-\theta_2|| + 2\epsilon L_z.
            \end{align*}
        \item Under  Assumption \ref{ass:l-lip-hessian},  $h$ is $2\epsilon H_z$-{\rm approximately $H_\theta$-Hessian Lipschitz} with respect to $\theta$:
            \begin{align*}
                ||\nabla^2 h(\theta_1,z) - \nabla^2 h(\theta_2, z)|| 
                &\leq H_\theta 
                ||\theta_1-\theta_2|| + 2\epsilon H_z.
            \end{align*}
    \end{enumerate}
\end{lemma}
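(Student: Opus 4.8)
The plan is to exploit the variational structure $h(\theta,z)=\max_{z'\in\B_\epsilon(z)}l(\theta,z')$ and to transfer each Lipschitz property of $l$ to $h$ by comparing maximizers. Throughout, write $z^*_i=z^*_i(\theta_i)\in\arg\max_{z'\in\B_\epsilon(z)}l(\theta_i,z')$ for a perturbation attaining the maximum at $\theta_i$. The geometric fact used repeatedly is that any two such maximizers satisfy $\|z^*_1-z^*_2\|_p\le\|z^*_1-z\|_p+\|z-z^*_2\|_p\le 2\epsilon$, since both lie in $\B_\epsilon(z)$.

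For Part~1 I would use the standard argument that a pointwise maximum of uniformly Lipschitz functions is Lipschitz. Writing $h(\theta_1,z)-h(\theta_2,z)=l(\theta_1,z^*_1)-\max_{z'}l(\theta_2,z')\le l(\theta_1,z^*_1)-l(\theta_2,z^*_1)\le L\|\theta_1-\theta_2\|$ by Assumption~\ref{ass:l-lip}, and then symmetrizing in the roles of $\theta_1$ and $\theta_2$, gives the claim.

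For Parts~2 and~3 I would first invoke Danskin's (envelope) theorem to identify the derivatives of $h$ in $\theta$ with those of $l$ evaluated at the maximizer, namely $\nabla h(\theta_i,z)=\nabla l(\theta_i,z^*_i)$ and $\nabla^2 h(\theta_i,z)=\nabla^2 l(\theta_i,z^*_i)$. The estimate then follows from a single triangle inequality that inserts the mixed term $\nabla l(\theta_2,z^*_1)$ (resp.\ $\nabla^2 l(\theta_2,z^*_1)$): the first difference varies only $\theta$ at the fixed perturbation $z^*_1$ and is bounded by $L_\theta\|\theta_1-\theta_2\|$ (resp.\ $H_\theta\|\theta_1-\theta_2\|$) via the $\theta$-part of Assumption~\ref{ass:l-gradient-lip} (resp.\ Assumption~\ref{ass:l-lip-hessian}), while the second difference varies only $z$ at the fixed parameter $\theta_2$ and is bounded by $L_z\|z^*_1-z^*_2\|_p\le 2\epsilon L_z$ (resp.\ $H_z\|z^*_1-z^*_2\|_p\le 2\epsilon H_z$) via the $z$-parts of the same assumptions together with the geometric bound above. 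Summing the two contributions yields exactly the stated approximate Lipschitz constants $2\epsilon L_z$ and $2\epsilon H_z$.

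I expect the main obstacle to be the rigorous justification of the derivative identities for $h$, i.e.\ the application of Danskin's theorem. When the maximizer $z^*(\theta)$ is unique the envelope theorem gives $\nabla h(\theta,z)=\nabla l(\theta,z^*(\theta))$ directly, but in general $h$ need only be directionally differentiable; one must argue that wherever $h$ is differentiable its gradient coincides with $\nabla l(\theta,z^*)$ for some maximizer, and likewise one order up for the Hessian. I would handle this by noting that the $\theta$-gradient-Lipschitz and $\theta$-Hessian-Lipschitz hypotheses on $l$, combined with the compactness of $\B_\epsilon(z)$, supply the regularity of the value function needed so that the triangle-inequality estimates hold for the (almost-everywhere defined) derivatives and then extend to all $\theta_1,\theta_2$ by continuity.
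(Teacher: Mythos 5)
Your proposal is correct and follows essentially the same route as the paper's proof: identify the maximizers $z^*_i$, use the standard max-of-Lipschitz-functions argument for Part~1, apply the Danskin-type identities $\nabla h(\theta_i,z)=\nabla l(\theta_i,z^*_i)$ and $\nabla^2 h(\theta_i,z)=\nabla^2 l(\theta_i,z^*_i)$ for Parts~2 and~3, insert a mixed term via the triangle inequality, and bound $\|z^*_1-z^*_2\|_p\le 2\epsilon$. If anything, you are more careful than the paper, which invokes the derivative identities without comment on uniqueness or differentiability of the value function.
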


\subsection{Preliminaries for Analysis} 
\label{subsec:analysis}
We consider the SGD without replacement,
that is, given a training set $S\sim \Dd^n$,
algorithm $\A$ chooses a random permutation $\pi$ over $[n]=\{1,\cdots,n\}$
and cycles through $S$ in the order determined by the permutation.
If not mentioned otherwise, our analyses focus on the on-average stability of adversarial training in a single pass.

Suppose the update of $\A$ starts from an initialization point $\theta_1$ and for $t\in [n]$,
\begin{align*}
    \theta_{t+1} = \G_\A(\theta_t, z_{\pi(t)}, \alpha_t),
\end{align*} 
where the permutation $\pi$ depends on $\A$
and $\alpha_t$ is the $t$-th step size.
We update $T$ steps in a single pass for $T\in [n]$ and analyze the on-average stability of the algorithm output $\A(S) = \theta_{T+1}$.
We assume the variance of stochastic gradients in $\A$ obey 
\begin{equation}
\label{eq-var}
    \E_{S}[||\nabla h(\theta_t, z_{\pi(t)}) - \nabla \AR_\Dd(\theta_t)||^2] \leq \sigma^2
\end{equation}
for all $t\in[T]$.
The variance $\sigma$ describes the distance between the stochastic gradient and the optimal gradient.
{Indeed, $\sigma$ will change if the distribution $\Dd$ changes.}

\subsection{Convex Adversarial Losses}
For convex adversarial losses, 
our analysis requires the approximate gradient Lipschitz assumption.
\begin{theorem} \label{thm:convex-main}
    Assume the adversarial loss $h(\theta, z)$ is 
    non-negative, convex in $\theta$, $L$-Lipschitz and $\eta$-approximately $\beta$-gradient Lipschitz with respect to $\theta$. 
    Let the step sizes $\alpha_t\leq 1/\beta$.
    Then algorithm $\A$ is $\varepsilon(\Dd, \theta_1)$-on-average stable with
    \begin{align}
        &\varepsilon(\Dd, \theta_1)
        =(\frac{2\sigma L}{n}+L\eta)\sum_{t=1}^{T}\alpha_t 
        +  \frac{4L}{n}\sqrt{\sum_{t=1}^{T}\alpha_t}\nonumber\\ 
        &\cdot \sqrt{\AR_\Dd(\theta_1)-\AR_\Dd(\theta^*) + \frac{\beta\sigma^2}{2} \sum_{t=1}^{T}\alpha_t^2 + \eta L \sum_{t=1}^{T}\alpha_t},\label{equ:main-thm-convex}
    \end{align}
    where $\theta_1$ is the initialization point.
\end{theorem}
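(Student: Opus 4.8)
The plan is to reduce the on-average stability to a bound on the expected parameter deviation between the two SGD trajectories, and then to control that deviation through the gradients encountered along the trajectory, which are in turn tied back to the population risk by a descent argument. Write $\theta_t$ for the iterates produced from $S$ and $\theta'_t$ for those produced from $S^{i,z}$, both started at $\theta_1$, and set $\delta_t=\|\theta_t-\theta'_t\|$. By the first item of Lemma~\ref{lem:h-lip-conditions} (that $h$ is $L$-Lipschitz in $\theta$), for any fixed $z$,
\[
\E_{S,z,\A}[h(\A(S),z)-h(\A(S^{i,z}),z)]\le L\,\E[\delta_{T+1}],
\]
so it suffices to bound $\E[\delta_{T+1}]$.

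Next I would set up the growth recursion for $\delta_t$. At every step $t$ with $\pi(t)\neq i$ both trajectories apply the same map $\theta\mapsto\theta-\alpha_t\nabla h(\theta,z_{\pi(t)})$; using convexity together with the $\eta$-approximate $\beta$-gradient Lipschitz condition and $\alpha_t\le 1/\beta$, this map is non-expansive up to an additive error, giving $\delta_{t+1}\le\delta_t+\alpha_t\eta$. At the single step $t_0=\pi^{-1}(i)$ where the data sets differ, the iterates are still identical beforehand (single pass, so $\delta_{t_0}=0$), hence
\[
\delta_{t_0+1}\le\alpha_{t_0}\big(\|\nabla h(\theta_{t_0},z_i)\|+\|\nabla h(\theta_{t_0},z)\|\big).
\]
Since $z_i$ is unused before step $t_0$, it is independent of $\theta_{t_0}$, as is the fresh point $z$; unrolling and taking expectation over the random permutation (under which $t_0$ is uniform over the steps) yields, with $z\sim\Dd$ fresh,
\[
\E[\delta_{T+1}]\le\frac{2}{n}\sum_{t=1}^{T}\alpha_t\,\E\|\nabla h(\theta_t,z)\|+\eta\sum_{t=1}^{T}\alpha_t .
\]
Multiplying by $L$ already accounts for the additive $L\eta\sum_t\alpha_t$ term.

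The heart of the argument is converting the gradient norms into the population risk. The variance bound $\E_S\|\nabla h(\theta_t,z_{\pi(t)})-\nabla\AR_\Dd(\theta_t)\|^2\le\sigma^2$ gives $\E\|\nabla h(\theta_t,z)\|^2\le\|\nabla\AR_\Dd(\theta_t)\|^2+\sigma^2$, whence $\E\|\nabla h(\theta_t,z)\|\le\E\|\nabla\AR_\Dd(\theta_t)\|+\sigma$; the $\sigma$ piece contributes the $\tfrac{2\sigma L}{n}\sum_t\alpha_t$ term. For the residual $\|\nabla\AR_\Dd(\theta_t)\|$ piece I would run a descent-lemma telescoping on the population risk: the approximate descent inequality applied at $\theta_{t+1}=\theta_t-\alpha_t\nabla h(\theta_t,z_{\pi(t)})$, together with unbiasedness of the stochastic gradient and $\tfrac{\beta\alpha_t^2}{2}\le\tfrac{\alpha_t}{2}$, gives
\[
\E[\AR_\Dd(\theta_{t+1})]\le\E[\AR_\Dd(\theta_t)]-\frac{\alpha_t}{2}\E\|\nabla\AR_\Dd(\theta_t)\|^2+\frac{\beta\sigma^2}{2}\alpha_t^2+\eta L\alpha_t .
\]
Summing over $t$ and using $\E[\AR_\Dd(\theta_{T+1})]\ge\AR_\Dd(\theta^*)$ bounds $\tfrac12\sum_t\alpha_t\E\|\nabla\AR_\Dd(\theta_t)\|^2$ by exactly the quantity appearing under the square root in the statement. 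A Cauchy--Schwarz step, $\sum_t\alpha_t\E\|\nabla\AR_\Dd(\theta_t)\|\le\sqrt{\sum_t\alpha_t}\sqrt{\sum_t\alpha_t\E\|\nabla\AR_\Dd(\theta_t)\|^2}$, then produces the $\tfrac{4L}{n}\sqrt{\sum_t\alpha_t}\sqrt{\cdots}$ term, and collecting the three contributions gives $\varepsilon(\Dd,\theta_1)$ (up to the constant in front of the square root).

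The main obstacle is the approximate non-expansiveness at the shared-example steps: the per-step error must accumulate linearly as $\eta\sum_t\alpha_t$ rather than as a square root. A naive approximate co-coercivity estimate of the form $\langle\theta_t-\theta'_t,\,\nabla h(\theta_t)-\nabla h(\theta'_t)\rangle\ge\tfrac1\beta\|\cdot\|^2-\tfrac{2\eta}{\beta}\|\cdot\|$ only yields $\delta_{t+1}^2\le\delta_t^2+O(\alpha_t\eta^2/\beta)$, i.e. a $\sqrt{\alpha_t}$-scale error, which is too weak since $\sqrt{\alpha_t/\beta}\ge\alpha_t$ when $\alpha_t\le 1/\beta$. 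Getting the clean linear accumulation requires a sharper expansiveness lemma, for instance by splitting $\nabla h$ into an exactly $\beta$-Lipschitz monotone part plus a residual of norm $O(\eta)$ afforded by the max-structure of $h$, and this is the step I expect to demand the most care. The uniformization of $t_0$ over the steps and the conditional unbiasedness of the without-replacement stochastic gradient are secondary technical points.
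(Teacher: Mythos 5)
Your proposal is correct and follows essentially the same route as the paper: reduce on-average stability to $L\,\E[\delta_{T+1}]$, split steps according to whether the differing index is hit, bound the gradient norms by the variance term plus $\E\|\nabla \AR_\Dd(\theta_t)\|$, and control the latter by the approximate descent lemma telescoped against $\AR_\Dd(\theta_1)-\AR_\Dd(\theta^*)$ followed by Cauchy--Schwarz (the paper's Lemmas~\ref{lem:sum-R-inequality} and~\ref{lem:sum-grad-h-inequality}). The one step you flag as the main obstacle --- the additive $\alpha_t\eta$ non-expansiveness at shared-example steps in the convex case --- is exactly the third statement of Lemma~\ref{lem:ref-xiao-approx-grad-lip}, which the paper imports from \citet{xiao2022stability} rather than re-proving, so no new argument is needed there.
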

\textbf{Proof sketch.}
Given a data set $S=\{z_1,\cdots,z_n\}\sim \Dd^n$, 
an example $z\sim \Dd$, and an index $i\in [n]$, 
we denote $S^{i,z} = \{z'_1,\cdots,z'_n\}$ with $z'_j=z_j$ for $j\neq i$ and $z'_i=z$.
Let $\theta_t$, $\theta_t'$ be the $t$-th outputs of $\A(S)$ and $\A(S^{i,z})$, respectively.
Denote the distance of two trajectories at step $t$ by 
$\delta_t(S, z, i, \A) = ||\theta_t-\theta_t'||$.
As both two updates start from $\theta_1$, we have $\delta_1(S,z, i,\A)=0$.
Denote $\Delta_t(S,z,i)=\E_\A[\delta_t(S,z,i, \A)|\delta_{t_0}(S,z,i,\A)=0]$.
Lemma \ref{lem:kuz-average-stable-delta}  (Lemma 5 in \cite{kuzborskij2018data}) tells that
\begin{align*}
    \E_{S, z,\A}[h(\theta_t, z)-h(\theta_t', z)]
    \leq L \E_{S,z}[\Delta_t(S,z,i)].
\end{align*}
According to whether the algorithm meets the different sample with index $i$ at step $t$, we derive the following recursion formula involving the adversarial budget $\eta=2\epsilon L_z$,
\begin{align*}
    \Delta_{t+1}(S,z,i) 
    &\leq \Delta_t(S,z,i) + (1-\frac{1}{n})\alpha_t \eta \\
    &+ \frac{\alpha_t}{n}\E_\A[||\nabla h(\theta_t,z_{\pi(t)})||+||\nabla h(\theta_t',z'_{\pi(t)})||].
\end{align*}
By repeatedly applying Jensen's inequality, both expectations 
$\E_{S}[\sum_{t=1}^{T}\alpha_t||\nabla h(\theta_t,z_{\pi(t)})||]$ and  
$\E_{z,S}[\sum_{t=1}^{T}\alpha_t||\nabla h(\theta_t',z'_{\pi(t)})||]$ have the same upper bound (Lemma \ref{lem:sum-grad-h-inequality}) 
\begin{align*}
    &\sum_{t=1}^{T}\sigma\alpha_t +  2\sqrt{\sum_{t=1}^{T}\alpha_t} \cdot \sqrt{r + \frac{\beta}{2} \sum_{t=1}^{T} \sigma^2\alpha_t^2 + \eta L \sum_{t=1}^{T}\alpha_t},
\end{align*}
where $r = \AR_\Dd(\theta_1)-\AR_\Dd(\theta^*)$.
Then, we can recursively bound $\E_{S, z,\A}[h(\theta_t, z)-h(\theta_t', z)]$ and 
prove the theorem.

\begin{remark}
By Theorem \ref{thm-KL},  Theorem \ref{thm:convex-main} gives an upper bound  $\varepsilon(\Dd, \theta_1)$ for the robust generalization gap of algorithm $\A$, 
similarly for Theorems \ref{thm:non-convex-main}
and \ref{thm:main-non-convex-multiple-pass}.
\end{remark}
When the step size is $\alpha_t=\O(\frac{1}{\sqrt{t}})\leq \frac{1}{\beta}$ and the adversarial budget $\epsilon=0$,
this bound reduces to the result in  \cite{kuzborskij2018data}.
Now we fix step sizes to be constant and bound the adversarial loss gap between the initialization point and the optima via the Lipschitz condition.
\begin{corollary} \label{cor:convex-main-constant}
    Let the step size $\alpha_t $ be a constant $\alpha \leq 1/\beta$ and $r = \AR_\Dd(\theta_1)-\AR_\Dd(\theta^*)$.
    Then   algorithm $\A$ is $\varepsilon(\Dd, \theta_1)$-on-average stable with
\begin{align}
        &\varepsilon(\Dd, \theta_1)\nonumber \\
        =&\eta\alpha  L T + \frac{2\alpha L T}{n}(\sigma+\sqrt{2}\sigma + 2\sqrt{\eta  L}) + \frac{4L\sqrt{\alpha rT}}{n}. \label{equ:main-result-convex}
\end{align}
\end{corollary}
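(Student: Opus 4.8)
The plan is to specialize the general bound \eqref{equ:main-thm-convex} to constant step sizes and then collapse the square-root factor using sub-additivity together with the step-size constraint $\alpha\le 1/\beta$. First I would substitute $\alpha_t=\alpha$ for all $t$, so that $\sum_{t=1}^{T}\alpha_t=\alpha T$ and $\sum_{t=1}^{T}\alpha_t^2=\alpha^2 T$. Plugging these into Theorem \ref{thm:convex-main} yields
\begin{align*}
    \varepsilon(\Dd,\theta_1)=\Big(\frac{2\sigma L}{n}+L\eta\Big)\alpha T+\frac{4L}{n}\sqrt{\alpha T}\,\sqrt{r+\frac{\beta\sigma^2}{2}\alpha^2 T+\eta L\alpha T},
\end{align*}
where $r=\AR_\Dd(\theta_1)-\AR_\Dd(\theta^*)$. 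The first summand already supplies the terms $\eta\alpha L T$ and $\frac{2\sigma L\alpha T}{n}$ appearing in \eqref{equ:main-result-convex}, so all the remaining work lies in bounding the square-root factor.

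Next I would apply the sub-additivity of the square root, $\sqrt{a+b+c}\le\sqrt{a}+\sqrt{b}+\sqrt{c}$, to the inner radical and distribute the prefactor $\frac{4L}{n}\sqrt{\alpha T}$ over the three resulting pieces. The $r$-piece gives $\frac{4L}{n}\sqrt{\alpha T}\sqrt{r}=\frac{4L\sqrt{\alpha rT}}{n}$, which is exactly the last term of \eqref{equ:main-result-convex}. The $\eta L$-piece gives $\frac{4L}{n}\sqrt{\alpha T}\sqrt{\eta L\alpha T}=\frac{4L\sqrt{\eta L}\,\alpha T}{n}$, matching the $2\sqrt{\eta L}$ contribution inside the parenthesis after factoring out $\frac{2\alpha LT}{n}$.

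The only term that needs the smoothness hypothesis is the variance piece. Here
\begin{align*}
    \frac{4L}{n}\sqrt{\alpha T}\sqrt{\frac{\beta\sigma^2}{2}\alpha^2 T}=\frac{2\sqrt{2}\,L\sigma}{n}\sqrt{\beta\alpha}\,\alpha T,
\end{align*}
and the key observation is that $\alpha\le 1/\beta$ forces $\sqrt{\beta\alpha}\le 1$, so this quantity is at most $\frac{2\sqrt{2}\,L\sigma\alpha T}{n}$, i.e.\ the $\sqrt{2}\sigma$ contribution. Collecting the three bounded pieces together with the first summand and factoring $\frac{2\alpha LT}{n}$ out of the $n$-dependent terms reproduces \eqref{equ:main-result-convex} exactly. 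I expect this last estimate to be the only non-mechanical part of the argument: the crucial point is recognizing that the constraint $\alpha\beta\le 1$ is precisely what eliminates the explicit $\beta$-dependence and converts the $\sigma\alpha^{3/2}$ scaling into the clean $\sigma\alpha$ scaling: everything else is direct substitution and the elementary square-root inequality.
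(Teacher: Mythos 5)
Your proposal is correct and follows essentially the same route as the paper's proof: substitute the constant step size into Theorem \ref{thm:convex-main}, use $\alpha\beta\le 1$ to absorb the $\beta$-dependence of the variance term, and apply sub-additivity of the square root before regrouping. The only cosmetic difference is that the paper applies the bound $\alpha\beta\le 1$ inside the radical before splitting it, whereas you split first and then bound $\sqrt{\alpha\beta}\le 1$; the resulting estimates are identical.
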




\textbf{Comparison.}
We compare our result with existing results for adversarial training with convex adversarial losses in a single pass.
For clarity, we take constant step size $\alpha$ and use the $\O$ notation.
\begin{itemize}
    \item Result of  \citet{xing2021algorithmic}:
    \begin{align}
        \O(\alpha  L^2 \sqrt{T}+ \frac{\alpha L^2 T }{n}). \label{equ:result-xing-convex}
    \end{align}
    
    \item Result of  \citet{xiao2022stability}:
    \begin{align}
        \O(\eta  \alpha L  T  + \frac{\alpha L^2 T }{n}). \label{equ:result-xiao-convex}
    \end{align}
    
    \item Our result:
    \begin{align}
        \O(\eta\alpha T L+ 
        \frac{\alpha \sigma L T  + \alpha  \sqrt{\eta}L^{1.5} T  + L \sqrt{\alpha r T  }}{n} ) \label{equ:result-ours-convex}
    \end{align}
\end{itemize}
The smoothness of $h$ is not required for the result of  \cite{xing2021algorithmic}.
The bound~\eqref{equ:result-xing-convex} remains unchanged under changes in the adversarial training budget $\epsilon$.
Thus this result does not capture the empirical observations that the robust overfitting phenomenon deteriorates as $\epsilon$ grows.
Approximate smoothness of $h$ is required for the result of  \cite{xiao2022stability}.
The bound~\eqref{equ:result-xiao-convex} takes into account $\epsilon$, i.e. $\eta = 2\epsilon L_z$ by the second statement in Lemma \ref{lem:h-lip-conditions}.
However, this bound stays unchanged whenever the distribution shifts or the initialization point changes. 
Detailed discussion is shown in Appendix \ref{app:discussion-of-uniform-stability-based-results}.

Our bound grows with the adversarial training budget $\epsilon$ as well.
In general case, \eqref{equ:result-xiao-convex} and \eqref{equ:result-ours-convex} are both $\O(T)$.
When $\eta=0$, our bound reduces to $\O(\frac{\alpha \sigma L T + L\sqrt{\alpha r T}}{n})$ which is the case for standard training.
In the case that $\eta=0$ and $\sigma$ is negligible, our bound is dominated by the term $\frac{L \sqrt{\alpha r T} }{n}$ and becomes tighter than $\O(T)$ in Equ.~\eqref{equ:result-xing-convex} and ~\eqref{equ:result-xiao-convex}. 
Since  $r$ relies on $\theta_1$ and $\Dd$,
our result implies that a properly selected initialization point matters for robust generalization, and a potential distribution shift caused by some poisoning attack may affect the robust generalization.

\subsection{Non-Convex Adversarial Losses}
For non-convex adversarial losses,
our analysis requires both approximate gradient Lipschitz and approximate Hessian Lipschitz assumptions. 
\begin{theorem}\label{thm:non-convex-main}
    Suppose the adversarial loss $h(\theta,z)$ is
    non-negative,
    $L$-Lipschitz, $\eta$-approximately $\beta$-gradient Lipschitz and 
    $\nu$-approximately $\rho$-Hessian Lipschitz 
    with respect to $\theta$.
    Let the step sizes $\alpha_t=\frac{c}{t}$ with $c\leq \min \{\frac{1}{\beta},\frac{1}{4\beta \ln T}, \frac{1}{8(\beta\ln T)^2}\}$.
    Then $\A$ is $\varepsilon(\Dd, \theta_1)$-on-average stable with 
    \begin{align}
        &\varepsilon(\Dd, \theta_1)\nonumber\\
        =& \frac{1+\frac{1}{c\gamma}}{n}(2c L^2+ n c \eta L)^{\frac{1}{1+c\gamma}}
        \cdot(\E_{S,\A}[\AR_\Dd(\A(S))] T)^{\frac{c\gamma}{1+c\gamma}}, \label{equ:main-thm-non-convex}
    \end{align}
    where
    \begin{align*}
        \gamma
        &=\min\{\beta, \Tilde{\O}(\E_{z}[||\nabla^2 h(\theta_1,z)|| ] + \nu + \Delta^*) \},\\
        \Delta^*
        &=\rho (\sqrt{(\AR_\Dd(\theta_1)-\AR_\Dd(\theta^*))c} + c\sigma + c\sqrt{\eta L}).
    \end{align*}
\end{theorem}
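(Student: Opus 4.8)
The plan is to adapt the data-dependent non-convex stability argument of \citet{kuzborskij2018data} to the approximate-Lipschitz adversarial setting, tracking how the additive slacks $\eta=2\epsilon L_z$ and $\nu=2\epsilon H_z$ propagate through the curvature estimates. As in the convex proof, I set up the paired trajectories $\theta_t$ from $\A(S)$ and $\theta_t'$ from $\A(S^{i,z})$, write $\delta_t=||\theta_t-\theta_t'||$, and fix a cutoff index $t_0$, conditioning on the event $E=\{\delta_{t_0}=0\}$ that the two runs coincide up to step $t_0$. Since $\Pr(\neg E)\le t_0/n$ and, on $E$, Lemma \ref{lem:kuz-average-stable-delta} gives $\E[h(\theta_t,z)-h(\theta_t',z)]\le L\,\E[\Delta_t]$, the stability \eqref{equ:main-thm-non-convex} splits into a \emph{first-encounter} contribution and a \emph{growth} contribution. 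Unlike the uniform case, $h$ is unbounded, so I bound the first-encounter term not by $\sup h$ but by the output population risk $\E_{S,\A}[\AR_\Dd(\A(S))]$, using non-negativity of $h$ to control $h(\theta_{T+1},z)-h(\theta_{T+1}',z)$ on $\neg E$; this is exactly where that quantity enters the bound.

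The core step is a data-dependent expansiveness recursion. Writing the gradient difference through a path integral of the Hessian, $\nabla h(\theta_t,z)-\nabla h(\theta_t',z)=\bar H_t(\theta_t-\theta_t')$ with $\bar H_t=\int_0^1\nabla^2 h(\theta_t'+s(\theta_t-\theta_t'),z)\,ds$, the same-sample update is $(1+\alpha_t||\bar H_t||)$-expansive via $||I-\alpha_t\bar H_t||\le 1+\alpha_t||\bar H_t||$, while the differing-sample step (probability $1/n$) and the adversarial slack contribute the $\tfrac{2\alpha_t}{n}||\nabla h||$ and $(1-\tfrac1n)\alpha_t\eta$ terms, exactly as in the proof sketch of Theorem \ref{thm:convex-main}. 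To replace the crude global bound $||\bar H_t||\le\beta$ by the sharper data-dependent constant $\gamma$, I invoke the approximate Hessian-Lipschitz property (Lemma \ref{lem:h-lip-conditions}, part 3): $||\nabla^2 h(\theta_t,z)||\le||\nabla^2 h(\theta_1,z)||+\rho||\theta_t-\theta_1||+\nu$. Taking expectations over $z$ and bounding the displacement $\E[||\theta_t-\theta_1||]$ by an approximate-descent/telescoping argument — the non-convex analogue of Lemma \ref{lem:sum-grad-h-inequality}, which supplies $\sqrt{(\AR_\Dd(\theta_1)-\AR_\Dd(\theta^*))c}+c\sigma+c\sqrt{\eta L}$ from the initial excess risk, the gradient variance \eqref{eq-var}, and the adversarial slack respectively — produces precisely $\Delta^*=\rho(\sqrt{rc}+c\sigma+c\sqrt{\eta L})$ and hence $\gamma=\min\{\beta,\Tilde{\O}(\E_z[||\nabla^2 h(\theta_1,z)||]+\nu+\Delta^*)\}$, with the $\Tilde{\O}$ absorbing the $\sum_t 1/t=\O(\ln T)$ factor from the harmonic step sizes.

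With the effective expansiveness $1+\alpha_t\gamma=1+c\gamma/t$ in hand, I solve the linear recursion from $t_0$ to $T$ using $\prod_{\tau=t_0+1}^{t}(1+c\gamma/\tau)\le(t/t_0)^{c\gamma}$, so the growth contribution is of order $\tfrac1n(2cL^2+nc\eta L)(T/t_0)^{c\gamma}$, where the bracket collects the Lipschitz gradient-norm term $2cL^2$ and the adversarial term $nc\eta L$. Adding the first-encounter term, of order $\tfrac{t_0}{n}\E_{S,\A}[\AR_\Dd(\A(S))]$, and optimizing over the cutoff $t_0$ — the standard balancing that trades the linear-in-$t_0$ term against the $(T/t_0)^{c\gamma}$ term — yields the stated exponents $\tfrac{1}{1+c\gamma}$ and $\tfrac{c\gamma}{1+c\gamma}$ and the prefactor $\tfrac{1+1/(c\gamma)}{n}$, giving \eqref{equ:main-thm-non-convex}; the step-size constraints $c\le\min\{1/\beta,1/(4\beta\ln T),1/(8(\beta\ln T)^2)\}$ are precisely what keep the product, harmonic-sum, and displacement estimates valid.

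The hard part is the data-dependent curvature control in the second step: the effective expansiveness $\gamma$ depends on the expected displacement $\E[||\theta_t-\theta_1||]$, which in turn is governed by the very gradient dynamics whose stability I am trying to bound, so the argument is mildly self-referential and must be closed by controlling the displacement through the in-expectation descent of $\AR_\Dd$ rather than through $\delta_t$. Ensuring the approximate slacks $\eta,\nu$ enter additively — rather than compounding multiplicatively over the $T$ steps — and that the $\Tilde{\O}$ logarithmic factors are genuinely absorbed under the stated step-size cap is the most delicate bookkeeping; everything downstream is the routine Hardt-style recursion solve and $t_0$-optimization.
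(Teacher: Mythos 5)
Your overall architecture matches the paper's proof: the same split via Lemma \ref{lem:kuz-average-stable-delta} into a growth term $L\,\E[\Delta_{T+1}]$ and a first-encounter term $\frac{t_0-1}{n}\E_{S,\A}[\AR_\Dd(\theta_{T+1})]$, the same Taylor-expansion-with-integral-remainder argument to replace the global smoothness $\beta$ by a data-dependent curvature anchored at $\theta_1$, the same use of Lemma \ref{lem:sum-grad-h-inequality} to turn the displacement $\|\theta_t-\theta_1\|$ into $\Delta^*=\rho(\sqrt{rc}+c\sigma+c\sqrt{\eta L})$ up to logarithmic factors, and the same $t_0$-balancing at the end. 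The additive (rather than compounding) entry of $\eta$ and $\nu$ also works out exactly as you describe.

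However, there is one genuine gap, and it sits at the step you treat as routine: ``with the effective expansiveness $1+\alpha_t\gamma$ in hand, I solve the linear recursion using $\prod(1+c\gamma/\tau)\le(t/t_0)^{c\gamma}$.'' The expansion coefficients are the \emph{random} quantities $\xi_t(S,z,i,\A)$ (they depend on $\|\nabla^2 h(\theta_1,z_{\pi(t)})\|$ and on the whole gradient trajectory), and unrolling the recursion produces $\E_{S,z}\bigl[\exp\bigl(c\sum_{k}\psi_k/k\bigr)\bigr]$, not $\exp\bigl(c\sum_k\E[\psi_k]/k\bigr)$. Jensen's inequality goes the wrong way here, so you cannot simply substitute the expectation $\gamma$ into the product; doing so only yields the uniform $\beta$-based bound, not the data-dependent one. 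The paper closes this by first capping $\psi_t=\E_\A[\min\{\xi_t,\beta\}]$ pointwise (so the centered variables satisfy $|\psi_k-\mu_k|\le2\beta$ and the log-sum is bounded by $2\beta\ln T$), and then applying the Bernstein-type moment-generating-function inequality (Lemma \ref{lem:kuz-bernstein}) to get $\E[\exp(c\sum_k(\psi_k-\mu_k)/k)]\le\exp(c\sum_k\mu_k/k)$, at the price of a factor $2$ in the exponent. This is precisely where the otherwise unexplained constraints $c\le\frac{1}{4\beta\ln T}$ and $c\le\frac{1}{8(\beta\ln T)^2}$ come from (they are $|c|\le\frac{1}{2b}$ and the variance condition with $b=2\beta\ln T$); your proposal cites these constraints but never supplies the mechanism that needs them. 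Without this MGF step the recursion solve fails, so you should add it explicitly before the $t_0$ optimization.
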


\textbf{Proof sketch.}
By Lemma \ref{lem:kuz-average-stable-delta}  (Lemma 5 in \cite{kuzborskij2018data}), $\forall t_0\in[n+1]$),
\begin{align*}
&\E_{S, z,\A}[h(\theta_{T+1}, z)-h(\theta_{T+1}', z)]\\
&\leq L \E_{S,z}[\Delta_{T+1}(S,z,i)]+\frac{t_0-1}{n}\E_{S,\A}[\AR_\Dd(\theta_{T+1})].
\end{align*}
The key is to recursively bound $\Delta_{T+1}(S,z,i)$.
When the algorithm meets the different sample with index $i$ at step $t$ with probability $\frac{1}{n}$, we have
\begin{align*}
    ||\G_\A(\theta_t) - \G_\A(\theta'_t)||\leq \delta_t(S,z,i,\A) + 2\alpha_t L.
\end{align*}
Otherwise, the second statement in Lemma \ref{lem:ref-xiao-approx-grad-lip} (from \cite{xiao2022stability}) implies \begin{align*}
    ||\G_\A(\theta_t) - \G_\A(\theta'_t)||\leq (1+\alpha_t \beta)\delta_t(S,z,i,\A)+\alpha_t \eta.
\end{align*}
Additionally, in this case, Lemma \ref{lem:non-convex-xi} starts from Taylor expansion with integral remainder and exploits the approximate Hessian Lipschitz condition, deriving another bound as 
\begin{align*}
||\G_\A(\theta_t) - \G_\A(\theta'_t)|| \leq (1+\alpha_t \xi_t(S,z,i,\A))\delta_t(S,z,i,\A),
\end{align*}
where 
\begin{align*}
    \E_{S,z}[\xi_t(S,z,i,\A)]=\Tilde{\O}(\E_{z}[||\nabla^2 h(\theta_1,z)|| ] + \nu + \Delta^*).
\end{align*}
Let $\psi_t(S,z,i) =\E_{\A}[ \min\{\xi_t(S,z,i,\A),\beta \}]$
and we have
\begin{align*}
    &\Delta_{t+1}(S,z,i)\leq \frac{1}{n} (\Delta_t(S,z,i)+2\alpha_t L) \\
    &+ (1-\frac{1}{n}) ((1+\alpha_t \psi_t(S,z,i))\Delta_t(S,z,i) + \alpha_t \eta).
\end{align*}
Assigning proper step sizes $\alpha_t$ and leveraging Lemma \ref{lem:kuz-bernstein}, the on-average stability is given as
\begin{align*}
    &\E_{S, z,\A}[h(\theta_{T+1}, z)-h(\theta_{T+1}', z)]\\
    \leq &(\frac{2L^2+\eta n L}{2n\gamma})(\frac{T}{t_0-1})^{2c\gamma}
    +\frac{t_0-1}{n}\E_{S,\A}[\AR_\Dd(\theta_T)].
\end{align*}
Then we take the optimal $t_0$ and obtain the theorem.

From Equ. \eqref{equ:main-thm-non-convex}, 
we see that smaller $\gamma$ yields higher stability.
Note that $\gamma$ is controlled by $\eta$ and $\nu$, 
the adversarial population risk at the initialization point,
and the average Hessian norm of adversarial loss at the initialization point over the distribution. 

Since  SGD in a single pass is considered, 
we take $T \approx n$ and obtain that 
$\varepsilon(\Dd, \theta_1) = \O(n^{-\frac{1}{1+c\gamma}})$
which can be improved to a more optimistic result $\O(n^{-1})$ 
when the adversarial empirical risk $\AR_S(\A(S))$ becomes negligible according to  \citet{kuzborskij2018data}.
Due to $\eta=2 \epsilon L_z$ and $\nu=2\epsilon H_z$, 
a large adversarial budget $\epsilon$ makes the algorithm unstable and setting $\epsilon=0$ derives the result for standard training.
The gradient and Hessian Lipschitz constants $L_z$ and $H_z$ amplify the effect of $\epsilon$ and this explains why adversarial training appears to be more tricky than standard training and requires more training data  \cite{schmidt2018adversarially, gowal2021improving, wang2023better}.

The initialization point is another factor that affects robust generalization. 
Intuitively, adversarial training prefers an initialization point naturally with low adversarial population risk which is close to the global optima. 
Furthermore, our result suggests that a proper selection of the initialization point should better have low curvature over the distribution.

\textbf{Comparison.}
Assume that the adversarial loss  $h(\theta, z)$ is bounded in $[0, B]$ and $\alpha_t=\frac{c}{t}$ with $c\leq \frac{1}{\beta}$. 
The result\footnote{They reported a conservative result in the paper.
Here we place their optimal result for comparison.} 
of  \citet{xiao2022stability} for the non-convex case is 
\begin{align}
\frac{1+\frac{1}{c\beta}}{n}(2cL^2+nc\eta L)^{\frac{1}{1+c\beta}}(B T)^{\frac{c\beta}{1+c\beta}}.\label{equ:result-xiao-nonconvex}
\end{align}
Observe that \eqref{equ:main-thm-non-convex} and \eqref{equ:result-xiao-nonconvex} have similar forms.
Nevertheless, 
\eqref{equ:result-xiao-nonconvex} remains unchanged under data poisoning attacks.
Our result replaces $\beta$ with $\gamma$ which captures much more information dependent on the initialization point, the loss function, and data distribution.
Besides $\eta$, the approximation $\nu$ emphasizes the effect of $\epsilon$ again in our bound.
Moreover, $\gamma$ and $c$ are bounded by $\beta$ and $\frac{1}{\beta}$ respectively in \eqref{equ:main-thm-non-convex}.
{During the training, the dataset size $n$ is fixed and the term involving the training step $T$ dominates the bound in Equation (8), namely smaller $\gamma$ means smaller $(\E_{S,\A}[\AR_\Dd(\A(S))] T)^{\frac{c\gamma}{1+c\gamma}}$, and then a tighter bound.}
Thus, our result is no worse than \eqref{equ:result-xiao-nonconvex}.

\textbf{Multiple-pass Case.}
Note that Equ. \eqref{equ:main-thm-non-convex} holds within one pass through the training set.
If we loosen some data-dependency requirements, say $\gamma$, 
the on-average stability analysis provides a result for the multiple-pass case.
\begin{theorem}[Multiple-pass Case] 
\label{thm:main-non-convex-multiple-pass}
    Assume the adversarial loss $h(\theta, z)$ is     non-negative, convex in $\theta$, $L$-Lipschitz and $\eta$-approximately $\beta$-gradient Lipschitz with respect to $\theta$. 
    Let the step sizes $\alpha_t\leq \frac{c}{t}$ with $c\leq \frac{1}{\beta}$.
    Then algorithm $\A$ is $\varepsilon(\Dd, \theta_1)$-on-average stable with
    \begin{align}
    &\varepsilon(\Dd, \theta_1)\nonumber\\
    =&\frac{1+\frac{1}{c\beta}}{n}(2cL^2+nc\eta L)^{\frac{1}{1+c\beta}} ( \E_{S,\A}[\AR_\Dd(\A(S))] T)^{\frac{c\beta}{1+c\beta}}.
    \label{equ:multipass-main-result-non-convex}
\end{align}
\end{theorem}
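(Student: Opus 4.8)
The plan is to mirror the proof of Theorem~\ref{thm:non-convex-main} with one deliberate simplification: since we now drop the Hessian Lipschitz hypothesis, we never invoke the Hessian-based refinement, so the per-step expansion factor is the cruder $(1+\alpha_t\beta)$ coming from approximate gradient Lipschitzness rather than the sharper $(1+\alpha_t\xi_t)$. In effect this amounts to replacing $\gamma$ by $\beta$ throughout that argument, which is exactly the gap between~\eqref{equ:main-thm-non-convex} and~\eqref{equ:multipass-main-result-non-convex}. First I would fix a free index $t_0\in[n+1]$ and apply Lemma~\ref{lem:kuz-average-stable-delta} to obtain
\begin{align*}
\E_{S,z,\A}[h(\theta_{T+1},z)-h(\theta_{T+1}',z)] \leq L\,\E_{S,z}[\Delta_{T+1}(S,z,i)] + \frac{t_0-1}{n}\E_{S,\A}[\AR_\Dd(\A(S))],
\end{align*}
reducing the problem to bounding $\Delta_{T+1}$ along trajectories that are coupled to agree at step $t_0$.

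Second, I would set up the recursion for $\Delta_{t+1}$ by conditioning on whether step $t$ processes the differing example. If it does (probability $1/n$), each of the two iterates moves by at most $\alpha_t L$ by the $L$-Lipschitz property of Lemma~\ref{lem:h-lip-conditions}, so $\|\G_\A(\theta_t)-\G_\A(\theta_t')\|\leq\delta_t+2\alpha_t L$; otherwise (probability $1-\tfrac1n$) the second statement of Lemma~\ref{lem:ref-xiao-approx-grad-lip} gives $\|\G_\A(\theta_t)-\G_\A(\theta_t')\|\leq(1+\alpha_t\beta)\delta_t+\alpha_t\eta$. Taking expectation over $\A$ yields
\begin{align*}
\Delta_{t+1}(S,z,i) \leq \frac{1}{n}\big(\Delta_t(S,z,i)+2\alpha_t L\big) + \Big(1-\frac1n\Big)\big((1+\alpha_t\beta)\Delta_t(S,z,i)+\alpha_t\eta\big).
\end{align*}
The point I would stress is that this recursion is valid across several passes: drawing a fresh permutation $\pi$ in each pass makes every step hit the differing example with probability exactly $1/n$ and otherwise a common example, so the convex combination above is pass-invariant. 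The only genuinely single-pass ingredient of Theorem~\ref{thm:non-convex-main} was the control of $\xi_t$ through Lemma~\ref{lem:non-convex-xi}, whose $\Delta^*$ relied on the single-pass gradient-sum estimate; discarding precisely that ingredient is what forces $\gamma\rightsquigarrow\beta$ but frees the argument from the one-pass restriction.

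Third, with $\alpha_t=c/t$ and $c\leq1/\beta$, I would unroll the recursion from $t_0$ to $T$ and bound the product $\prod_\tau(1+(1-\tfrac1n)\alpha_\tau\beta)$ using Lemma~\ref{lem:kuz-bernstein}, producing an intermediate bound of the form
\begin{align*}
\E_{S,z,\A}[h(\theta_{T+1},z)-h(\theta_{T+1}',z)] \leq \frac{2L^2+\eta nL}{n\beta}\Big(\frac{T}{t_0-1}\Big)^{c\beta} + \frac{t_0-1}{n}\E_{S,\A}[\AR_\Dd(\A(S))],
\end{align*}
after which optimizing the right-hand side over the continuous relaxation of $t_0-1$ balances the two terms and yields~\eqref{equ:multipass-main-result-non-convex}. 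The main obstacle I anticipate is not any single estimate but verifying carefully that multi-pass sampling really preserves the per-step $1/n$ replacement probability and the conditional independence the recursion needs, and that the product bound from Lemma~\ref{lem:kuz-bernstein} still applies once $T$ exceeds $n$; the $t_0$-optimization is then routine calculus, and convexity enters only to keep the gradient-norm and population-risk accounting (as in Lemma~\ref{lem:sum-grad-h-inequality}) consistent with the $\E_{S,\A}[\AR_\Dd(\A(S))]$ term that replaces the single-pass quantity $r$.
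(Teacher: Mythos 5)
Your proposal matches the paper's proof essentially step for step: the application of Lemma~\ref{lem:kuz-average-stable-delta}, the $1/n$ versus $1-\tfrac1n$ case split using the second statement of Lemma~\ref{lem:ref-xiao-approx-grad-lip}, the unrolled recursion with $\alpha_t=c/t$, and the final optimization over $t_0$ are all exactly what the paper does, and your observation that discarding the Hessian-based refinement (hence $\gamma\rightsquigarrow\beta$) is precisely what removes the single-pass restriction is the intended point. One cosmetic slip: the product $\prod_\tau\bigl(1+(1-\tfrac1n)\alpha_\tau\beta\bigr)$ is deterministic, so Lemma~\ref{lem:kuz-bernstein} is not needed there --- the elementary bound $1+x\le e^x$ together with $\sum_{k=t+1}^{T}1/k\le\ln(T/t)$ is all the paper uses.
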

Both 
\eqref{equ:main-thm-non-convex} and \eqref{equ:multipass-main-result-non-convex} 
contain the data-dependent factor $\E_{S,\A}$ $[\AR_\Dd(\A(S))]$ which can be much smaller than $B$ in \eqref{equ:result-xiao-nonconvex}.

\subsection{Poisoned Generalization Gap} \label{subsec:poison-generalization-gap}
To have a closer look at how changes in data distribution can affect robust generalization, we consider the distribution shift caused by a poisoning attack.
A data poisoning attack $\P$ maps a distribution $\Dd$ 
to the poisoned distribution $\P_\#\Dd$.
Poisoning is usually constrained by a given {\em poisoning budget}
$\epsilon'$ such that
$\sup_{z\in \D } ||\P(z)-z||_p \leq \epsilon'$.
The poisoned version of an algorithm $\A$ 
is denoted by $\A_{\P}$ which inputs $S\sim \Dd^n$ and
outputs $\A_{\P}(S) = \A({\P}(S))$ by minimizing $\AR_{{\P}(S)}(\theta)$.
The robust generalization gap of $\A_{\P}(S)$ over the poisoned distribution ${\P}_\#\Dd$
is called the {\em poisoned generalization gap}, denoted by 
$\varepsilon_{\P}$.
That is,
\begin{align}
    |\E_{S,\A_{\P}}[\AR_{{\P}_\# \Dd}(\A_{\P}(S))- \AR_{{\P}(S)}(\A_{\P}(S))]| &\leq \varepsilon_{\P}.\label{equ:poisoned-generalization-gap-varepsilon-P}
\end{align}

\textbf{Influence of poisoning.}
Our data-dependent bounds in Equ. \eqref{equ:main-thm-convex} and \eqref{equ:main-thm-non-convex} embody the influence of poisoning.
When the distribution $\Dd$ is poisoned by ${\P}$, the bound $\varepsilon(\Dd, \theta_1)$ becomes $\varepsilon(\P_\#\Dd, \theta_1)$.
The expected  curvature at the initialization point becomes $\E_{z}[||\nabla^2 h(\theta_1, {\P}(z))||]$.
The initial population risk gap become $\AR_{\P_\#\Dd}(\theta_1)-\AR_{\P_\#\Dd}(\theta_\P^*)$, in which $\theta_\P^*$ is optimal with respect to $\AR_{\P_\#\Dd}$.
Besides, the variance $\sigma$ also depends on the poisoning and becomes $\sigma_P$.
Additionally, the adversarial population risk $\E_{S,\A_{\P}}[\AR_{{\P}_\#\Dd}(\A_{\P}(S))]$ in
the poisoned counterparts of Equ. \eqref{equ:main-thm-non-convex} and Equ.\eqref{equ:multipass-main-result-non-convex} can be significantly influenced by the poisoning.

\section{Experiments}\label{sec:experiments}
In this section, experiments are used
to demonstrate the data-dependent stability of adversarial training and the advantages of our theoretical results.
We adopt $L_\infty$ norm as constraints of imperceptible perturbations. 
Experimental setups and details are presented in Appendix \ref{app:additional-experiments}.

\begin{figure}[ht]
\begin{center}
\subfigure
{
    \begin{minipage}{0.45\columnwidth}
    \centering
    \includegraphics[width=\columnwidth]{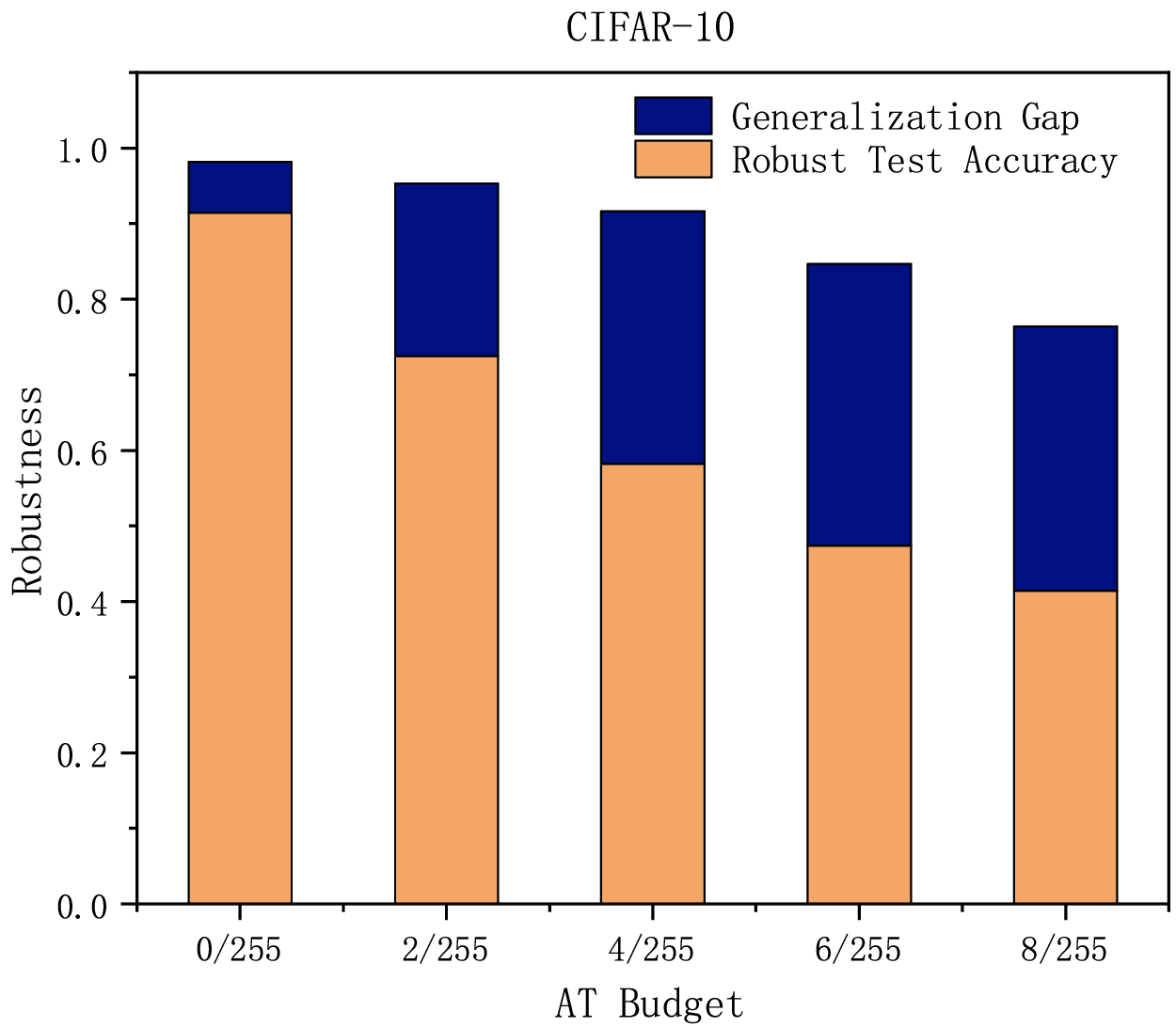}
    \centerline{(a) CIFAR-10}
    \end{minipage}
    \label{subfig:cifar10-budget}
    \hfill
}\subfigure
{
    \begin{minipage}{0.45\columnwidth}
    \centering
    \includegraphics[width=\columnwidth]{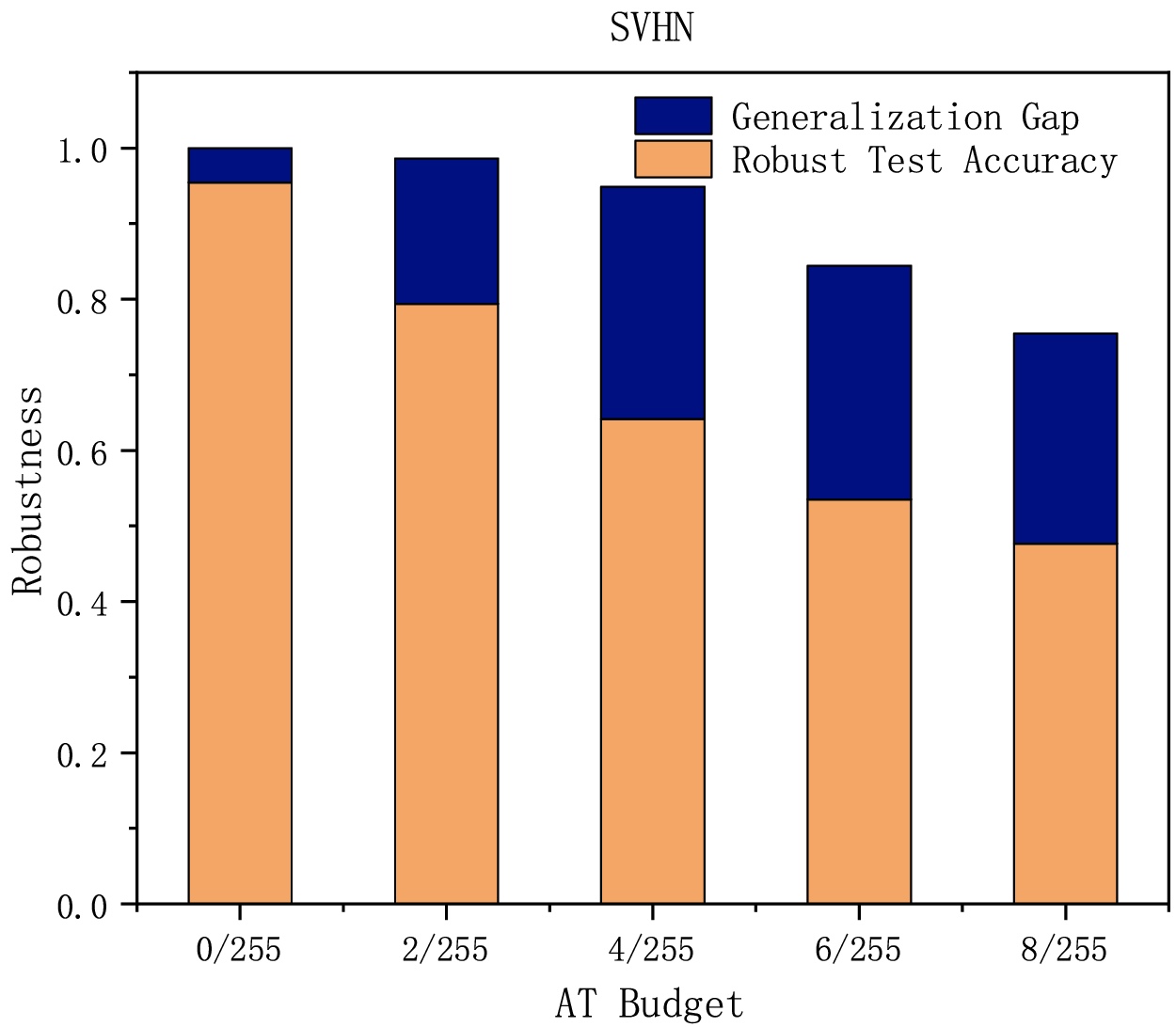}
    \centering{(b) SVHN}
    \end{minipage}
    \label{subfig:svhn-budget}
    \hfill
}
\caption{The robust performance of adversarial training with different with the AT budget $\epsilon$ ranging from $0$ to $8/255$.
}
\end{center}
\end{figure}

\subsection{Robust Generalization.}
We adversarially train  ResNet-18  \cite{he2016deep} 
on CIFAR-10, CIFAR-100 \cite{krizhevsky2009learning},
SVHN \cite{netzer2011reading},
and Tiny-ImageNet \cite{le2015tiny}. 
Figures \ref{subfig:cifar10-budget} 
and \ref{subfig:svhn-budget} show 
that the robust generalization is more difficult than the standard generalization, i.e. $\epsilon=0$ 
as shown by Equ. \eqref{equ:main-result-convex} and Equ. \eqref{equ:main-thm-non-convex}.
The effect of even a small $\epsilon$ such as $2/255$ is amplified by the gradient and Hessian Lipschitz constants in $z$, namely $L_z$ and $H_z$, and results in a large generalization gap. 
Moreover, the robust generalization gap increases with $\epsilon$ which implies that it is harder to ensure robustness in a broader area.
Figures \ref{subfig:cifar100-overfit} 
and \ref{subfig:tiny-imagenet-overfit}
present the robust overfitting phenomenon on CIFAR-100 and Tiny-ImageNet. 
When training errors converge to zero,
the robust generalization gaps (blue lines) grow throughout the whole training procedure, 
while the robust test accuracy (red lines) increases in the first 100 epochs, decreases from the first learning rate decay at the 100-th epoch,
and then jumps a little at the 150-th epoch before stabilizes.

\begin{figure}[ht]
\begin{center}
\subfigure
{
    \begin{minipage}{0.45\columnwidth}
    \centering
    \includegraphics[width=\columnwidth]{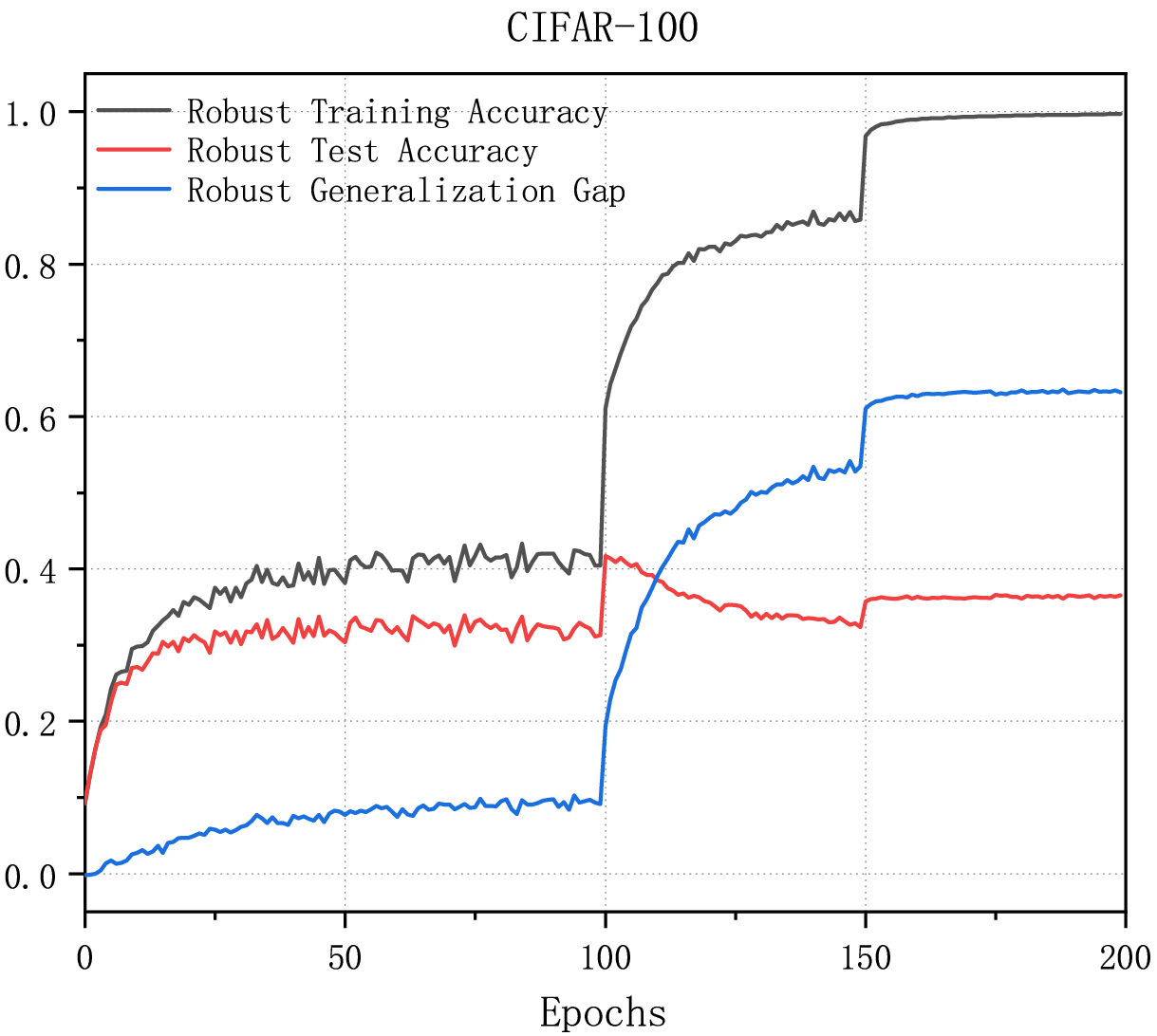}
    \centerline{(c) CIFAR-100}
    \end{minipage}
    \label{subfig:cifar100-overfit}
    \hfill
}\subfigure
{
    \begin{minipage}{0.45\columnwidth}
    \centering
    \includegraphics[width=\columnwidth]{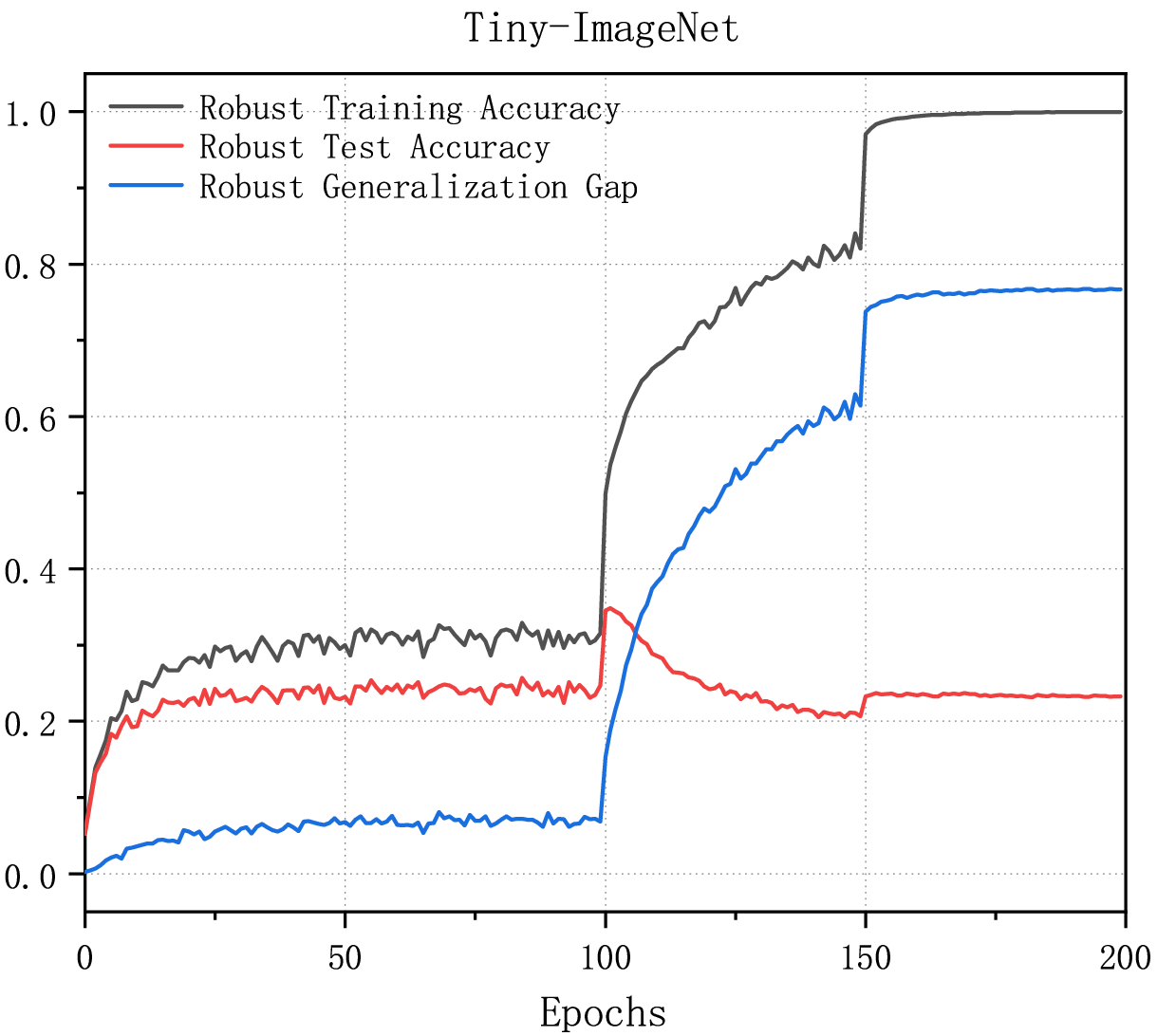}
    \centering{(d) Tiny-ImageNet}
    \end{minipage}
    \label{subfig:tiny-imagenet-overfit}
}
\caption{The robust overfitting phenomenon.
}
\label{fig:cifar10-at-budget}
\end{center}
\end{figure}
\subsection{Poisoned Robust Generalization.}
A poisoning attack is called a {\em stability attack} if the attack aims at destroying the robustness of a model, trained on the poisoned training set ${\P}(S)\sim {\P}_\#\Dd ^n$, on the original distribution $\Dd$, 
i.e. $\AR_{\Dd}(\A_{\P}(S))$.
Stability attacks employed in this paper
include the error-minimizing noise (EM)  \cite{huang2021unlearnable}, 
the robust error-minimizing noise (REM)  \cite{fu2022robust}, 
the adversarial poisoning (ADV)   \cite{fowl2021adversarial}, 
the hypocritical perturbation (HYP)  \cite{tao2022can}
and the class-wise random noise (RAN).
We poison both the training and test sets to simulate the poisoned distribution ${\P}_\#\Dd$.
Detailed poisoning settings are given in Appendix \ref{app:additional-experiments}.

\begin{figure}[ht]
\centering
\subfigure
{
    \begin{minipage}{0.45\columnwidth}
    \centering
    \includegraphics[width=1\columnwidth]{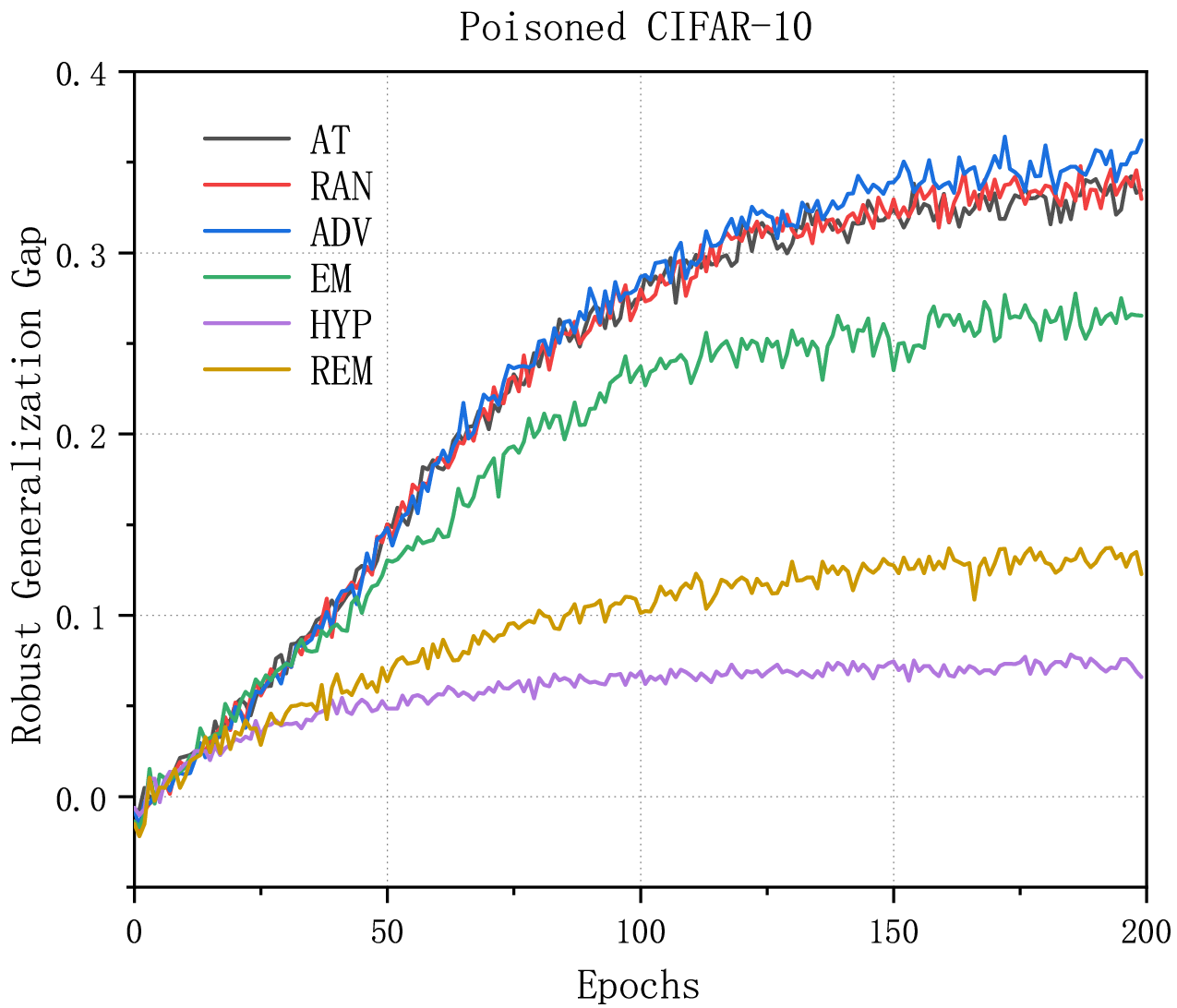}
    \centerline{(a) Poisoned Gen Gap}
    \end{minipage}
    \label{subfig:cifar10-poisoned-gen-gap}
}\subfigure
{
    \begin{minipage}{0.45\columnwidth}
    \centering
    \includegraphics[width=\columnwidth]{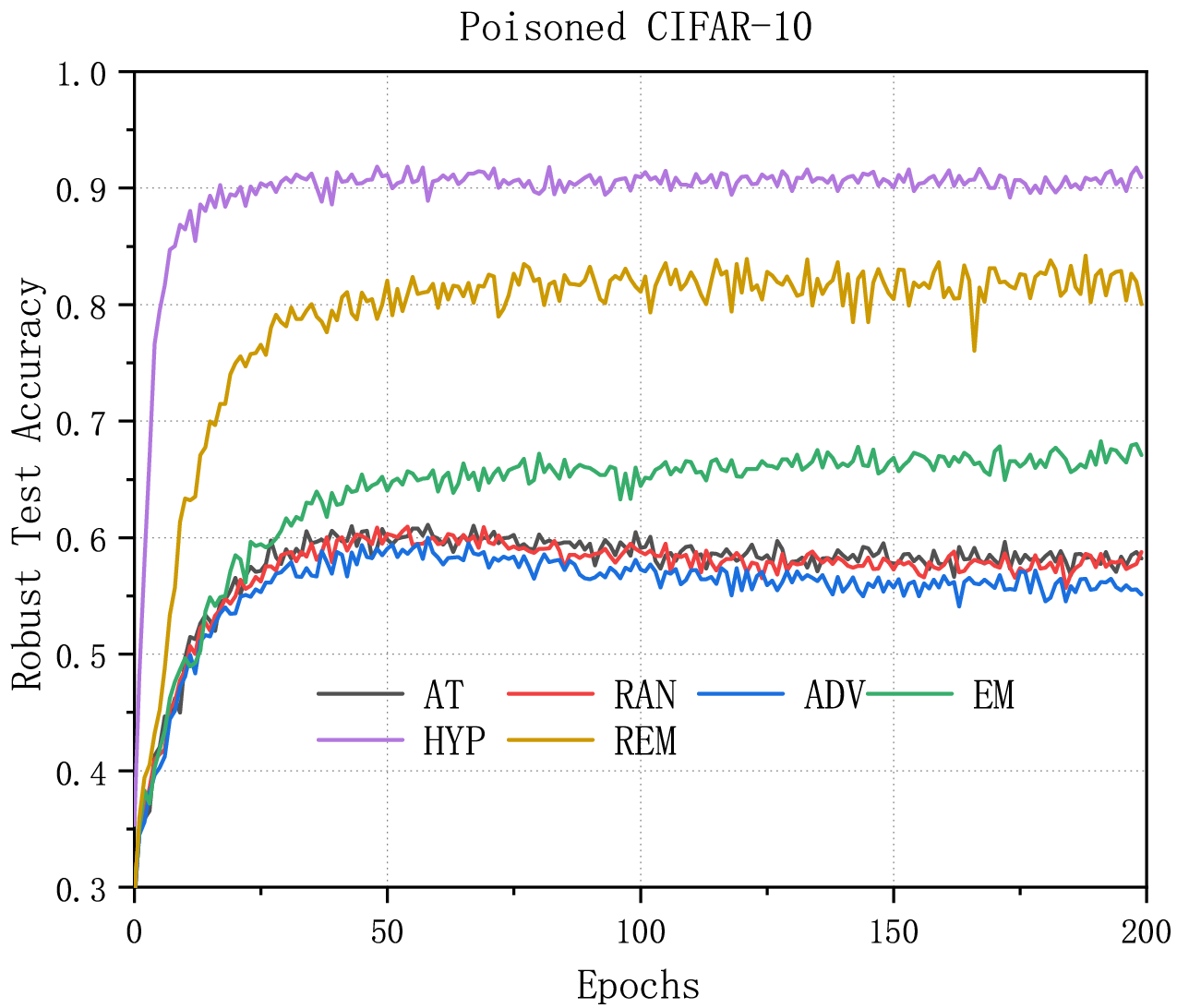}
    \centerline{(b) Poisoned Test Acc}
    \end{minipage}
    \label{subfig:cifar10-test-rob-acc}
}
\caption{
The robust generalization and robust test accuracy
on poisoned CIFAR-10 
under different stability attacks.
The adversarial training budget $\epsilon=4/255$ and the poisoning budget $\epsilon'=8/255$.
}
\end{figure}
Our bounds reflect the influence of data poisoning on the poisoned robust generalization.
First, effective stability attacks such as EM, HYP, and REM, indeed result in the shrinkage of robust generalization gaps on CIFAR-10 and ResNet-18 in Figures \ref{subfig:cifar10-poisoned-gen-gap}.
Comparing Figure \ref{subfig:cifar10-poisoned-gen-gap}
and Figure \ref{subfig:cifar10-test-rob-acc},
we see that robust generalization gaps present correlated trends to the test performance as pointed out by our results, i.e. Equ. \eqref{equ:main-thm-non-convex} and \eqref{equ:multipass-main-result-non-convex}.
We further study the robust generalization under the HYP attack with various intensities, i.e. the poisoning budget $\epsilon'$,
on CIFAR-100.
A larger budget leads to a stronger stability attack.
Figure \ref{subfig:cifar100-hyp-gen} and \ref{subfig:cifar100-hyp-test} show that 
a stronger stability attack results in a 
lower robust test accuracy as well as a narrower robust generalization gap on the poisoned data distribution, 
which confirms the principle stated by our results again.

\begin{figure}[ht]
\centering
\subfigure
{
    \begin{minipage}{0.45\columnwidth}
    \centering
    \includegraphics[width=\columnwidth]{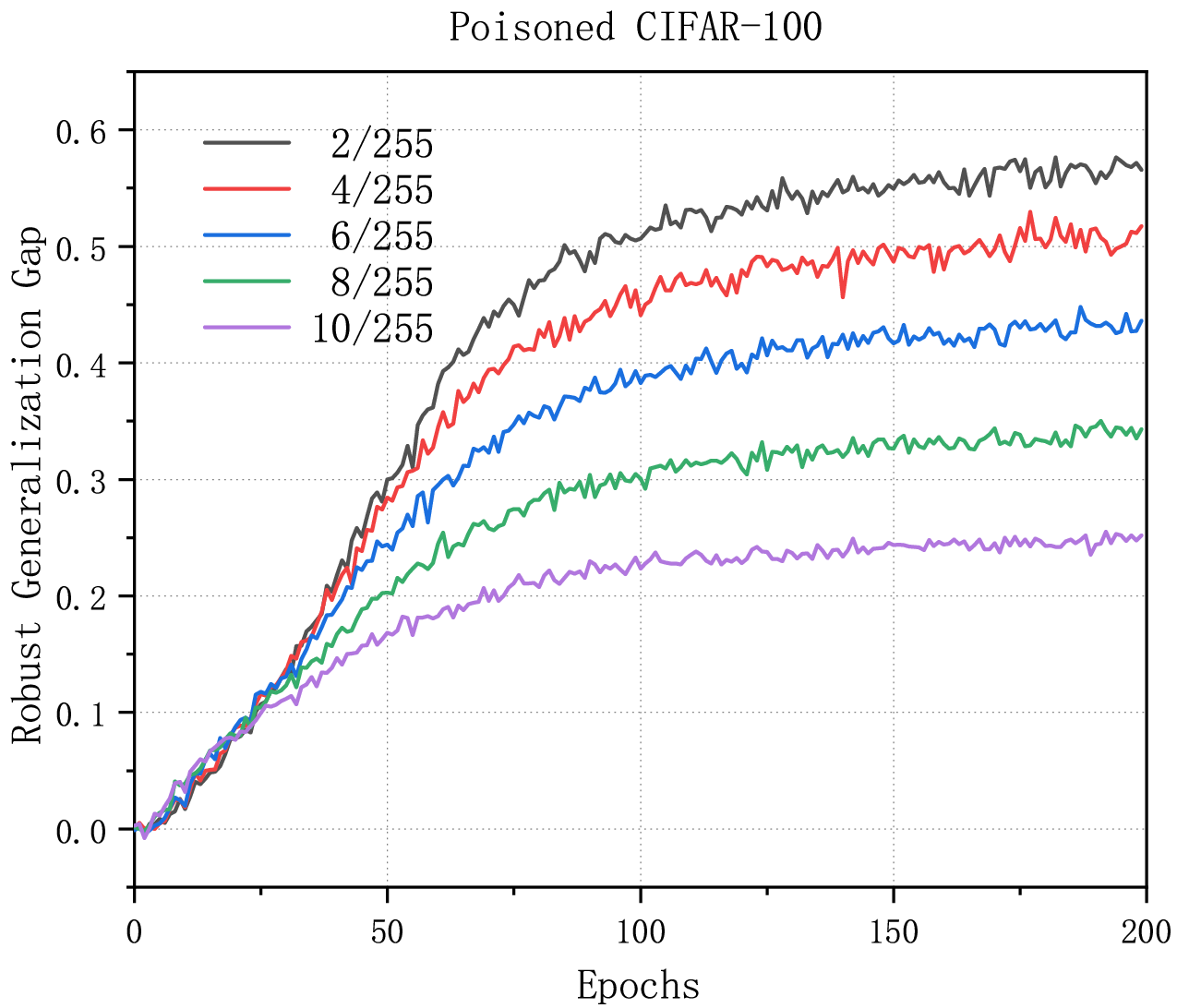}
    \centerline{(a) Poisoned Gen Gap}
    \label{subfig:cifar100-hyp-gen}
    \end{minipage}
}\subfigure
{
    \begin{minipage}{0.45\columnwidth}
\centering
    \includegraphics[width=\columnwidth]{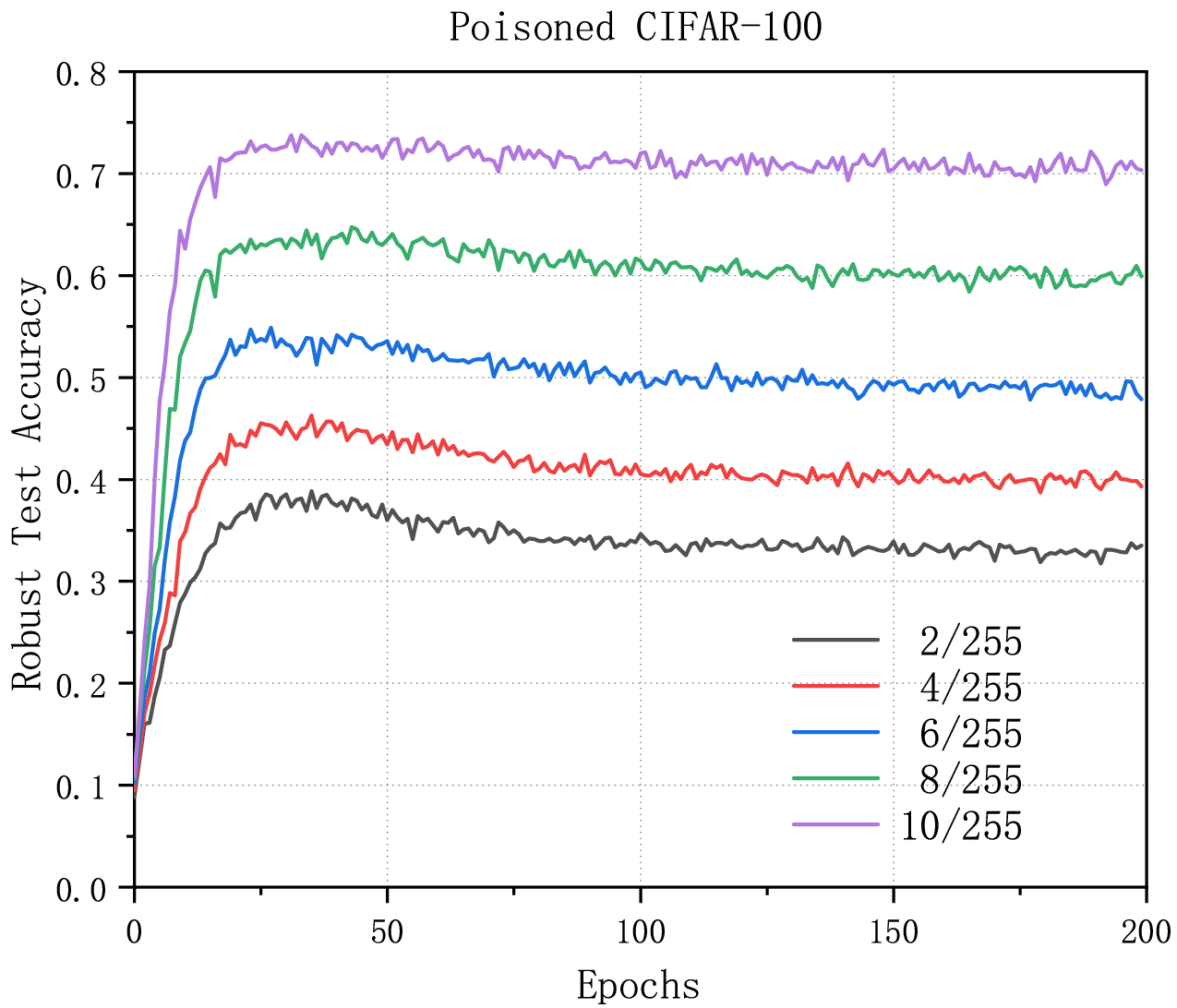}
    \centerline{(a) Poisoned Test Acc}
    \label{subfig:cifar100-hyp-test}
    \end{minipage}
}
\caption{
The robust generalization and robust test accuracy
on the poisoned data 
under HYP attack with different poisoning budgets.
The adversarial training budget $\epsilon=4/255$ and the poisoning budget $\epsilon'$ varies.
}
\end{figure}


\section{Conclusion}

Motivated by the need to analyze the generalization ability for adversarial training under data poisoning attacks, we present a data-dependent stability analysis of adversarial training.
Precisely, under certain reasonable smoothness conditions on the loss functions, we prove that SGD-based adversarial training is an
$\varepsilon(\Dd, \theta_1)$-on-average stable randomized algorithm,
and thus give an upper bound $\varepsilon(\Dd, \theta_1)$ for the robust generalization gap of the training algorithm.
The bound $\varepsilon(\Dd, \theta_1)$ depends on the data distribution and the initial point of the algorithm
and can be used to explain the changes in the poisoned robust generalization gaps of adversarial training.

\paragraph{Limitations and future works}
%
Our theoretical results provide the first attempt to analyze the influence of distribution shifts on robust generalization bounds, but only partial solutions are given.
More refined generalization bounds for adversarial training to capture more relationships between robust generalization and distribution are a future research problem.
%
%
Furthermore, alternative forms of Assumptions \ref{ass:l-gradient-lip} and \ref{ass:l-lip-hessian} for ReLU-based networks need to be further studied.

\bibliography{main}

\newpage
\appendix
\onecolumn
%

\section{Proofs}\label{app:proofs}

\subsection{Proof of Lemma \ref{lem:h-lip-conditions}}
\begin{proof}
\begin{enumerate}
    \item Assume that $h(\theta_1,z)=l(\theta_1,z_1)$ and $h(\theta_2,z)=l(\theta_2,z_2)$. 
    We have 
    \begin{align*}
        ||h(\theta_1,z)-h(\theta_2,z)||=||l(\theta_1,z_1)-l(\theta_2,z_2)||.
    \end{align*}
    Note that $l(\theta_1, z_1)\geq l(\theta_1, z_2)$ and $l(\theta_2, z_2)\geq l(\theta_2, z_1)$.\\
    If $l(\theta_1, z_1)\geq l(\theta_2, z_2)$,
    then
    \begin{align*}
        &||l(\theta_1, z_1) - l(\theta_2,z_2)||
        \leq l(\theta_1, z_1) - l(\theta_2,z_1)
        \leq L||\theta_1-\theta_2||.
    \end{align*}
    If $l(\theta_1, z_1)\leq l(\theta_2, z_2)$,
    then
    \begin{align*}
        &||l(\theta_1, z_1) - l(\theta_2,z_2)||
        \leq l(\theta_2, z_2) - l(\theta_1,z_2)\leq L||\theta_1-\theta_2||.
    \end{align*}
    
    \item Assume that $h(\theta_1,z)=l(\theta_1,z_1)$ and $h(\theta_2,z)=l(\theta_2,z_2)$. 
    \begin{align*}
        &||\nabla h(\theta_1, z)- \nabla h(\theta_2, z)||\\
        =& ||\nabla l(\theta_1, z_1)- \nabla l(\theta_2, z_2)||\\
        \leq& ||\nabla l(\theta_1, z_1)- \nabla l(\theta_1, z_2)||
       +||\nabla l(\theta_1, z_2)- \nabla l(\theta_2, z_2)||\\
        \leq& L_\theta ||\theta_1 - \theta_2|| + L_z ||z_1-z_2||_p\\
        \leq& L_\theta ||\theta_1 - \theta_2|| + 2\epsilon L_z.
    \end{align*}
    
    \item Assume that $h(\theta_1,z)=l(\theta_1,z_1)$ and $h(\theta_2,z)=l(\theta_2,z_2)$. 
    \begin{align*}
        &||\nabla^2 h(\theta_1, z)- \nabla^2 h(\theta_2, z)||\\
        =& ||\nabla^2 l(\theta_1, z_1)- \nabla^2 l(\theta_2, z_2)||\\
        \leq& ||\nabla^2 l(\theta_1, z_1)- \nabla^2 l(\theta_1, z_2)||
        +||\nabla^2 l(\theta_1, z_2)- \nabla^2 l(\theta_2, z_2)||\\
        \leq& H_\theta ||\theta_1 - \theta_2|| + H_z ||z_1-z_2||_p\\
        \leq& H_\theta ||\theta_1 - \theta_2|| + 2\epsilon H_z.
    \end{align*}
\end{enumerate}
\end{proof}

\subsection{Proof of Theorem \ref{thm:convex-main}}
We first prove several lemmas.
A core technique in stability analysis is to give the expansion properties of update rules.
\begin{definition}[Expansion] \label{def:expansion}
    The update rule $\G_\A$ is $\iota$-approximately $\kappa$-expansive, if $\forall z\in \D$
    \begin{align*}
        ||\G_\A(\theta_1, z, \alpha) - \G_\A(\theta_2, z, \alpha)|| \leq \kappa ||\theta_1-\theta_2||+\iota.
    \end{align*}
\end{definition}
If the original loss $l(\theta, z)$ is $\beta$-gradient Lipschitz in $\theta$, then the update rule in standard training is $1$-expansive in the convex case and $(1+\alpha \beta)$-expansive in the non-convex case \cite{hardt2016train}.
If the adversarial loss $h(\theta, z)$ is $\eta$-approximately $\beta$-gradient Lipschitz in $\theta$,
then the expansion coefficients in the update rule $\G_\A$ remain unchanged in both convex and non-convex cases, 
while the approximation parameter $\eta$ leads to an additional term $\alpha \eta$ in each update \cite{xiao2022stability}.

\begin{lemma}[  \citet{xiao2022stability}] \label{lem:ref-xiao-approx-grad-lip}
  Suppose the adversarial loss $h(\theta,z)$ is $\eta$-approximately $\beta$-gradient Lipschitz in $\theta$.
    \begin{enumerate}

        \item ($\eta$-approximate descent.)
            \begin{align*}
                &h(\theta_1, z) - h(\theta_2, z) 
                \leq \nabla h(\theta_2, z)^\top(\theta_1 - \theta_2) 
                + \frac{\beta}{2}||\theta_1 - \theta_2||^2 + \eta ||\theta_1-\theta_2||.  
            \end{align*}

        \item The update rule $\G_\A$ is $\eta$-approximately $(1+\alpha \beta)$-expansive:
            \begin{align*}
                &||\G_\A(\theta_1, z, \alpha)-\G_\A(\theta_2, z, \alpha)||
                \leq (1+\alpha \beta)||\theta_1-\theta_2||+\alpha \eta.
            \end{align*}
        
        \item Assume in addition that $h(\theta,z)$ is convex in $\theta$, for $\alpha \leq 1/\beta$, we have
            \begin{align*}
                ||\G_\A(\theta_1, z, \alpha)-\G_\A(\theta_2, z, \alpha)||\leq ||\theta_1-\theta_2||+\alpha \eta.
            \end{align*}
    \end{enumerate}
\end{lemma}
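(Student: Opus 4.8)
All three parts are driven by the single hypothesis that $h(\theta,z)$ is $\eta$-approximately $\beta$-gradient Lipschitz in $\theta$, so the plan is to prove them in order, treating the convex expansion bound (3) as the only nontrivial step. I take the SGD map to be the gradient step $\G_\A(\theta,z,\alpha)=\theta-\alpha\nabla h(\theta,z)$ and abbreviate $\delta=\theta_1-\theta_2$ and $g=\nabla h(\theta_1,z)-\nabla h(\theta_2,z)$, so the hypothesis reads $||g||\le \beta||\delta||+\eta$.

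For the approximate descent inequality (1), I would write the increment along the segment from $\theta_2$ to $\theta_1$ via the fundamental theorem of calculus,
\begin{align*}
h(\theta_1,z)-h(\theta_2,z)-\nabla h(\theta_2,z)^\top\delta
=\int_0^1\big(\nabla h(\theta_2+s\delta,z)-\nabla h(\theta_2,z)\big)^\top\delta\,ds,
\end{align*}
and bound the integrand by Cauchy--Schwarz together with $||\nabla h(\theta_2+s\delta,z)-\nabla h(\theta_2,z)||\le \beta s||\delta||+\eta$. Integrating $\beta s||\delta||^2+\eta||\delta||$ over $[0,1]$ yields exactly $\tfrac{\beta}{2}||\delta||^2+\eta||\delta||$. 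Part (2) is then immediate: expanding $\G_\A(\theta_1,z,\alpha)-\G_\A(\theta_2,z,\alpha)=\delta-\alpha g$ and applying the triangle inequality gives $||\delta-\alpha g||\le ||\delta||+\alpha||g||\le(1+\alpha\beta)||\delta||+\alpha\eta$.

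The convex case (3) is where the work lies and is the step I expect to be the main obstacle. The tempting route is to mimic the exact-smooth argument of \citet{hardt2016train}: expand $||\delta-\alpha g||^2=||\delta||^2-2\alpha\,\delta^\top g+\alpha^2||g||^2$ and use co-coercivity, which in the approximate setting I can derive from (1) by applying approximate descent to the surrogates $\theta\mapsto h(\theta,z)-\nabla h(\theta_i,z)^\top\theta$ (convex, minimized at $\theta_i$) and summing, giving $\delta^\top g\ge \tfrac1\beta||g||^2-\tfrac{2\eta}{\beta}||g||$. With $\alpha\le 1/\beta$ the quadratic residual is nonpositive, but the $\eta$-correction does not collapse to the advertised $\alpha\eta$: the three inequalities in play (convexity, approximate co-coercivity, and $||g||\le\beta||\delta||+\eta$) are jointly loose, so this crude combination overshoots the constant.

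To land exactly $||\delta||+\alpha\eta$ I would instead exploit the adversarial structure $h(\theta,z)=\max_{z'\in\B_\epsilon(z)}l(\theta,z')$ and freeze the inner maximizer. Let $z^\star=z^\star(\theta_2)$ realize $h(\theta_2,z)$ and set $\hat h(\theta)=l(\theta,z^\star)$; when the convexity of $h$ stems from convexity of the per-example losses $l(\cdot,z')$, the surrogate $\hat h$ is convex and $\beta$-smooth by Assumption \ref{ass:l-gradient-lip}, so its gradient step $\theta-\alpha\nabla\hat h(\theta)$ is exactly non-expansive for $\alpha\le 1/\beta$. The envelope theorem gives $\nabla h(\theta_2,z)=\nabla\hat h(\theta_2)$, so the true and frozen steps coincide at $\theta_2$, while at $\theta_1$ they differ by $\alpha||\nabla_\theta l(\theta_1,z^\star(\theta_1))-\nabla_\theta l(\theta_1,z^\star)||\le \alpha L_z||z^\star(\theta_1)-z^\star||_p\le 2\alpha\epsilon L_z=\alpha\eta$, using that both maximizers lie in $\B_\epsilon(z)$. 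A final triangle inequality through the non-expansive frozen step then delivers the bound. Verifying the non-expansiveness of the frozen surrogate and the envelope identity, and reconciling this concrete reduction with the abstractly stated hypothesis, is the part that needs the most care.
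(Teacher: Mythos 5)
The paper does not prove this lemma; it is imported verbatim from \citet{xiao2022stability}, so there is no in-paper argument to compare against. Your parts (1) and (2) are the standard and correct derivations (fundamental theorem of calculus plus the approximate gradient-Lipschitz bound, then the triangle inequality), and your treatment of part (3) is essentially the proof given in the cited source: freeze the inner maximizer $z^\star(\theta_2)$, use Danskin's theorem so that the true and frozen gradient steps agree at $\theta_2$, invoke exact non-expansiveness of the gradient step for the fixed convex $\beta$-smooth surrogate $l(\cdot,z^\star)$ when $\alpha\le 1/\beta$, and absorb the discrepancy at $\theta_1$ into $\alpha L_z\,\mathrm{diam}(\B_\epsilon(z))=\alpha\eta$. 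Your diagnosis that the purely abstract route via approximate co-coercivity only yields $\|\theta_1-\theta_2\|+2\eta\sqrt{\alpha/\beta}$ is accurate, and your closing caveat is a genuine one: as stated, part (3) assumes only that $h$ is convex and $\eta$-approximately $\beta$-gradient Lipschitz, but the proof with the advertised constant $\alpha\eta$ really uses the structural facts behind Lemma 3 (convexity and $L_\theta$-smoothness of each $l(\cdot,z')$, the cross-Lipschitz constant $L_z$, and $\eta=2\epsilon L_z$), together with the implicit regularity needed for Danskin's theorem (e.g.\ a unique inner maximizer). So the statement is correct in the setting the paper actually uses it, and your proof is the right one, but it is worth recording that part (3) is not a consequence of the abstract hypothesis alone.
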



Given a data set $S=\{z_1,\cdots,z_n\}\sim \Dd^n$, 
an example $z\sim \Dd$, and an index $i\in [n]$, 
we denote $S^{i,z} = \{z'_1,\cdots,z'_n\}$ with $z'_j=z_j$ for $j\neq i$ and $z'_i=z$.
Let $\theta_t$, $\theta_t'$ be the $t$-th outputs of $\A(S)$ and $\A(S^{i,z})$ respectively.
Denote the distance of two trajectories at step $t$ by 
$\delta_t(S, z, i, \A) = ||\theta_t-\theta_t'||$.
As both two updates start from $\theta_1$, we have $\delta_1(S,z, i,\A)=0$.
Since the on-average stability in Definition \ref{def:on-average-stability} takes supremum over the index $i\in [n]$,
the stability analysis aims at providing a unified bound for all $i\in [n]$.
Thus, we will not point out the selection of $i$ in later statements for brevity.

We restate  Lemma 5 in \cite{kuzborskij2018data} on which the data-dependent stability analysis relies.
Note that this lemma holds for SGD without replacement in both a single pass and multiple passes through the training set.
The multiple-pass case cycles through $S$ repeatedly in a fixed order determined by $\A$. 

\begin{lemma}\label{lem:kuz-average-stable-delta}
    Assume the adversarial loss $h(\theta,z)$ is
    non-negative and $L$-Lipschitz in $\theta$. 
    Then, $\forall t_0\in [n+1]$,
    \begin{align*}
        &\E_{S, z,\A}[h(\theta_t, z)-h(\theta_t', z)]
        \leq L \E_{S,z}[\E_\A[\delta_t(S,z,i,\A)|\delta_{t_0}(S,z,i,\A)=0]]
        +\frac{t_0-1}{n}\E_{S,\A}[\AR_\Dd(\theta_t)].
    \end{align*}
\end{lemma}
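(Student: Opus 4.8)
The plan is to bound the expected loss gap by conditioning on whether the two SGD trajectories have already decoupled by step $t_0$. For fixed $S,z,i$ let $E$ denote the event $\{\delta_{t_0}(S,z,i,\A)=0\}$ that the trajectories are still coupled at step $t_0$, and $E^c$ its complement. Recalling that the algorithm's randomness is the permutation $\pi$ it draws over $[n]$ (plus any internal randomness), I would write $\E_{S,z,\A}[h(\theta_t,z)-h(\theta'_t,z)]$ as the sum of the two contributions $\E_{S,z,\A}[(h(\theta_t,z)-h(\theta'_t,z))\one_E]$ and $\E_{S,z,\A}[(h(\theta_t,z)-h(\theta'_t,z))\one_{E^c}]$, and bound the coupled and decoupled branches separately.

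For the coupled branch I would use only that $h$ is $L$-Lipschitz in $\theta$ (item~1 of Lemma~\ref{lem:h-lip-conditions}): $h(\theta_t,z)-h(\theta'_t,z)\leq L\|\theta_t-\theta'_t\|=L\delta_t$. Integrating over the algorithm's randomness first, for each fixed $(S,z)$ we have $\E_\A[\delta_t\one_E]=\E_\A[\delta_t\mid E]\,\mathrm{Pr}_\A(E)\leq \E_\A[\delta_t\mid E]$, since $\delta_t\geq 0$ and the probability is at most one; averaging over $(S,z)$ reproduces exactly the first term $L\,\E_{S,z}[\E_\A[\delta_t\mid\delta_{t_0}=0]]$.

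For the decoupled branch I would first discard the subtracted loss by non-negativity of $h$, giving $(h(\theta_t,z)-h(\theta'_t,z))\one_{E^c}\leq h(\theta_t,z)\one_{E^c}$. The crucial structural observation is that $\theta_t$ is produced by running SGD on $S$ alone and therefore does \emph{not} depend on the independently drawn fresh example $z$. Moreover, since SGD without replacement first uses the differing example at step $\pi^{-1}(i)$, decoupling by step $t_0$ forces $E^c\subseteq\{\pi^{-1}(i)\leq t_0-1\}=:A$, an event determined by $\pi$ alone; hence $h(\theta_t,z)\one_{E^c}\leq h(\theta_t,z)\one_A$. Integrating out $z$ after conditioning on $\pi$ yields $\E_z[h(\theta_t,z)]=\AR_\Dd(\theta_t)$, so this branch reduces to $\E_{S,\A}[\one_A\,\AR_\Dd(\theta_t)]$.

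The main obstacle, and the step I would treat most carefully, is showing $\E_{S,\A}[\one_A\,\AR_\Dd(\theta_t)]=\tfrac{t_0-1}{n}\,\E_{S,\A}[\AR_\Dd(\theta_t)]$, i.e.\ that the decoupling indicator may be replaced by its probability. Here I would invoke exchangeability: because the samples are i.i.d.\ and $\theta_t$ depends on $S$ only through the first $t-1$ processed examples, conditioning on $\pi$ makes $\E_S[\AR_\Dd(\theta_t)\mid\pi]$ a constant independent of $\pi$ (reordering i.i.d.\ variables leaves their joint law unchanged). Consequently $\one_A$, a function of $\pi$, and $\AR_\Dd(\theta_t)$ factor after conditioning, and for a uniform permutation $\mathrm{Pr}(A)=\mathrm{Pr}(\pi^{-1}(i)\leq t_0-1)=(t_0-1)/n$. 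This identity persists in the multiple-pass setting, since the differing example is still first encountered within the initial pass, so $A$ is the same event for every $t_0\in[n+1]$. Adding the two branches gives the stated bound.
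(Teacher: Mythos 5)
Your proposal is correct and follows essentially the same route as the paper's proof: the same split on the coupling event $\{\delta_{t_0}(S,z,i,\A)=0\}$, the Lipschitz bound on the coupled branch, non-negativity plus the inclusion $\{\delta_{t_0}\neq 0\}\subseteq\{\pi^{-1}(i)<t_0\}$ on the decoupled branch, and the same exchangeability argument to factor the permutation indicator out and replace it by $(t_0-1)/n$. Your treatment of the factorization step is, if anything, slightly more explicit than the paper's.
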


Due to the change of notations, we repeat the proof here.
\begin{proof}
    By the Lipschitz condition and non-negativeness of $h$, we have
    \begin{align*}
        &h(\theta_t,z) - h(\theta'_t,z)\\
        =&  (h(\theta_t,z) - h(\theta'_t,z))\I\{\delta_{t_0}(S,z,i,\A)=0\} 
        + (h(\theta_t,z) - h(\theta'_t,z)) \I\{\delta_{t_0}(S,z,i,\A)\neq 0\}\\
        \leq& L\delta_{t}(S,z,i,\A)\I\{\delta_{t_0}(S,z,i,\A)=0\} 
        + h(\theta_t,z)\I\{\delta_{t_0}(S,z,i,\A)\neq 0\}.
    \end{align*}
    Take expectation w.r.t. $\A$ and we have
    \begin{align}
        &\E_\A[h(\theta_t,z) - h(\theta'_t,z)]
        \leq L \E_\A[\delta_{t}(S,z,i,\A)|\delta_{t_0}(S,z,i,\A)=0] 
        + \E_\A[ h(\theta_t,z)\I\{\delta_{t_0}(S,z,i,\A)\neq 0\}].\label{equ:kuz-lem-equ1}
    \end{align}
    Note that the first time that $\A$ selects the different example is $\pi^{-1}(i)$.
    Since that $\pi^{-1}(i) \geq t_0$ implies $\delta_{t_0}(S,z,i,\A)=0$,
    we have $\I\{\delta_{t_0}(S,z,i,\A)\neq 0\}\leq \I\{\pi^{-1}(i)< t_0 \}$.
    It follows that 
    \begin{align}
        &\E_{S,z}[\E_\A[ h(\theta_t,z)\I\{\delta_{t_0}(S,z,i,\A)\neq 0\}] ]]\nonumber\\
        &\leq  \E_{S,z}[\E_\A[ h(\theta_t,z)\I\{\pi^{-1}(i)< t_0\}]]\nonumber\\
        &= \E_{z, \A}[ \E_{S}[h(\theta_t,z)]\I\{\pi^{-1}(i)< t_0\}].\label{equ:kuz-lem-equ2}
    \end{align}
    Recall that a realization of $\A$ is a permutation $\pi$ of $[n]$.
    Thus, with a fixed $\pi$, taking over $S\sim \Dd^n$ equals to taking over both $S\sim \Dd^n$ and $\A$.
    That is, $\E_S[h(\theta_t,z)] = \E_{\A, S}[h(\theta_t,z)]$.
    As a consequence, we have
    \begin{align}
        &\E_{z,\A}[ \E_{S}[h(\theta_t,z)]\I\{\pi^{-1}(i)< t_0\}]\\
        =& \E_{S, z,\A}[h(\theta_t,z)]\E_{\A}[\I\{\pi^{-1}(i)< t_0\}]\nonumber\\
        \leq& \frac{t_0-1}{n}\E_{S, z,\A}[h(\theta_t,z)]. \label{equ:kuz-lem-equ3}
    \end{align}
    Combining Equation~\eqref{equ:kuz-lem-equ1} ~\eqref{equ:kuz-lem-equ2} and ~\eqref{equ:kuz-lem-equ3}, we get the statement.
\end{proof}


\begin{lemma} \label{lem:sum-R-inequality}
    Suppose the adversarial loss $h(\theta, z)$ is $L$-Lipschitz and $\eta$-approximately $\beta$-gradient Lipschitz in $\theta$.
    Then, in a single pass such that $T\in [n]$, we have that
    \begin{align*}
        &\sum_{t=1}^{T} (\alpha_t - \frac{\beta \alpha_t^2}{2})\E_{S}[||\nabla \AR_\Dd(\theta_t)||^2]
        \leq  \AR_\Dd(\theta_1) - \E_{S}[\AR_\Dd(\theta_T)] 
        + \eta L \sum_{t=1}^{T}\alpha_t 
        + \frac{\beta}{2}\sum_{t=1}^{T}\alpha_t^2 
            \E_{S}[||\nabla h(\theta_t,z_{\pi(t)}) - \nabla \AR_\Dd(\theta_t)||^2].
    \end{align*}
\end{lemma}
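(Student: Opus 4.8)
The plan is to establish a descent-type recursion over one SGD step and then telescope it across the single pass. First I would apply the $\eta$-approximate descent inequality from Lemma~\ref{lem:ref-xiao-approx-grad-lip}(1) to the population risk $\AR_\Dd$, which is legitimate because $\AR_\Dd(\theta)=\E_z[h(\theta,z)]$ inherits the $\eta$-approximate $\beta$-gradient Lipschitz property by linearity of expectation. Writing $\theta_{t+1}=\theta_t-\alpha_t\nabla h(\theta_t,z_{\pi(t)})$, this gives
\begin{align*}
\AR_\Dd(\theta_{t+1})-\AR_\Dd(\theta_t)
&\leq -\alpha_t\,\nabla\AR_\Dd(\theta_t)^\top\nabla h(\theta_t,z_{\pi(t)})\\
&\quad+\frac{\beta\alpha_t^2}{2}\|\nabla h(\theta_t,z_{\pi(t)})\|^2
+\alpha_t\eta\|\nabla h(\theta_t,z_{\pi(t)})\|.
\end{align*}

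Next I would take expectation over $S$ (hence over $z_{\pi(t)}$) and handle the three terms. For the last term I use the $L$-Lipschitz property, which bounds $\|\nabla h\|\leq L$ pointwise, turning $\alpha_t\eta\|\nabla h\|$ into $\alpha_t\eta L$. The main manipulation is the cross term: I would decompose $\nabla h(\theta_t,z_{\pi(t)})=\nabla\AR_\Dd(\theta_t)+\big(\nabla h(\theta_t,z_{\pi(t)})-\nabla\AR_\Dd(\theta_t)\big)$ and split the squared-gradient term the same way. In expectation over a freshly-drawn $z_{\pi(t)}$ the gradient $\nabla h(\theta_t,z_{\pi(t)})$ is unbiased for $\nabla\AR_\Dd(\theta_t)$, so the cross term collapses to $-\alpha_t\|\nabla\AR_\Dd(\theta_t)\|^2$ and the variance cross-term vanishes; expanding $\|\nabla h\|^2$ then yields exactly $\|\nabla\AR_\Dd(\theta_t)\|^2$ plus the variance term $\|\nabla h(\theta_t,z_{\pi(t)})-\nabla\AR_\Dd(\theta_t)\|^2$. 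Collecting the $\|\nabla\AR_\Dd(\theta_t)\|^2$ contributions produces the coefficient $(\alpha_t-\tfrac{\beta\alpha_t^2}{2})$ on the left-hand side.

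Finally I would sum the per-step inequality from $t=1$ to $T$. The left side telescopes so that the population-risk differences collapse to $\AR_\Dd(\theta_1)-\E_S[\AR_\Dd(\theta_{T+1})]$, and since $h\geq 0$ we may replace $\E_S[\AR_\Dd(\theta_{T+1})]$ by $\E_S[\AR_\Dd(\theta_T)]$ (or simply re-index, matching the stated form); the $\alpha_t\eta L$ terms sum to $\eta L\sum_t\alpha_t$, and the variance terms give the final $\tfrac{\beta}{2}\sum_t\alpha_t^2\,\E_S[\|\nabla h(\theta_t,z_{\pi(t)})-\nabla\AR_\Dd(\theta_t)\|^2]$ summand, which is exactly the claimed bound.

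The subtle point — and the step I expect to require the most care — is the unbiasedness claim that lets the cross term and the variance cross-term simplify under $\E_S$. Because this is SGD \emph{without replacement}, $z_{\pi(t)}$ is not independent of $\theta_t$ in the usual i.i.d.\ sense; however, conditioning on the sigma-algebra generated by the first $t-1$ draws and using that the $t$-th sample is still marginally distributed as a fresh draw from $\Dd$ (the training set itself being drawn i.i.d.\ from $\Dd^n$, so that averaging over $S$ restores the population expectation at each fixed position in the permutation) recovers $\E_S[\nabla h(\theta_t,z_{\pi(t)})\mid\theta_t]=\nabla\AR_\Dd(\theta_t)$. Making this conditioning argument precise, so that the outer $\E_S$ genuinely converts the stochastic gradient into the population gradient at every step, is where the real content of the lemma lies; everything else is the routine descent-lemma bookkeeping sketched above.
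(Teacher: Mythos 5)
Your proposal is correct and follows essentially the same route as the paper's proof: apply the $\eta$-approximate descent inequality of Lemma~\ref{lem:ref-xiao-approx-grad-lip} to $\AR_\Dd$, bound $\|\nabla h\|\leq L$ in the $\eta$-term, use the unbiasedness $\E_{S}[\nabla\AR_\Dd(\theta_t)^\top\nabla h(\theta_t,z_{\pi(t)})]=\E_{S}[\|\nabla\AR_\Dd(\theta_t)\|^2]$ (valid in a single pass since $\theta_t$ depends only on $z_{\pi(1)},\dots,z_{\pi(t-1)}$) to collect the $(\alpha_t-\tfrac{\beta\alpha_t^2}{2})$ coefficient and the variance term, then sum. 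The only cosmetic discrepancy — the telescoped sum naturally ends at $\E_S[\AR_\Dd(\theta_{T+1})]$ rather than $\E_S[\AR_\Dd(\theta_T)]$ — is present in the paper's own proof as well and is immaterial downstream, where this term is further bounded by $-\AR_\Dd(\theta^*)$.
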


\begin{proof}
    From the first statement in Lemma \ref{lem:ref-xiao-approx-grad-lip}, we have
    \begin{align*}
        &\AR_\Dd(\theta_{t+1}) - \AR_\Dd(\theta_t)\\ 
        \leq& \nabla \AR_\Dd(\theta_t)^\top(\theta_{t+1}-\theta_t) +\frac{\beta\alpha_t^2}{2}||\nabla h(\theta_t, z_{\pi(t)})||^2 +\eta \alpha_t||\nabla h(\theta_t, z_{\pi(t)})||\\
        \leq& (\beta \alpha_t^2 - \alpha_t)\nabla \AR_\Dd(\theta_t)^\top \nabla h(\theta_t, z_{\pi(t)}) + \eta \alpha_t L
        + \frac{\beta \alpha_t^2}{2} ||\nabla  h(\theta_t, z_{\pi(t)}) - \nabla \AR_\Dd(\theta_t)||^2 - \frac{\beta\alpha_t^2}{2}||\nabla \AR_\Dd(\theta_t)||^2 .
    \end{align*}
    
    Since $\theta_t$ is determined by $z_{\pi(1)},\cdots,z_{\pi(t-1)}$ and $\E_{z_{\pi(t)}}[h(\theta_t, z_{\pi(t)})] = \AR_\Dd(\theta_t)$,
    we have that 
    \begin{align*}
        \E_{S}[ \nabla \AR_\Dd(\theta_t)^\top\nabla h(\theta_t, z_{\pi(t)})] = \E_{S}[ ||\nabla \AR_\Dd(\theta_t)||^2].
    \end{align*}
    
    Take expectation w.r.t. $S$ and rearrange terms,
    \begin{align*}
        &(\alpha_t - \frac{\beta \alpha_t^2}{2})\E_{S}[ ||\nabla \AR_\Dd(\theta_t)||^2]
        \leq \E_{S}[\AR_\Dd(\theta_{t}) - \AR_\Dd(\theta_{t+1})
        +\frac{\beta \alpha_t^2}{2} ||\nabla  h(\theta_t, z_{\pi(t)}) - \nabla \AR_\Dd(\theta_t)||^2] + \eta \alpha_t L.
    \end{align*}
    Sum the above over $t=1,\cdots, T$ and get the statement.
\end{proof}


\begin{lemma} \label{lem:sum-grad-h-inequality}
    Suppose the adversarial loss $h(\theta,z)$ is $L$-Lipschitz and $\eta$-approximately $\beta$-gradient Lipschitz with respect to $\theta$, 
    and the step sizes $\alpha_t\leq 1/\beta$.
    Assume the variance of stochastic gradients in $\A$ obeys for all $t\in[T]$
    \begin{align*}
        \E_{S}[||\nabla h(\theta_t, z_{\pi(t)}) - \nabla \AR_\Dd(\theta_t)||^2] \leq \sigma_t^2.
    \end{align*}
    We have 
    \begin{align*}
        &\E_{S}[\sum_{t=1}^{T}\alpha_t ||\nabla h(\theta_t, z_{\pi(t)})|| ]
        \leq \sum_{t=1}^{T}\sigma_t \alpha_t 
        +  2\sqrt{\sum_{t=1}^{T}\alpha_t} \sqrt{\AR_\Dd(\theta_1)-\AR_\Dd(\theta^*) + \frac{\beta}{2} \sum_{t=1}^{T} \sigma_t^2\alpha_t^2 + \eta L \sum_{t=1}^{T}\alpha_t}
    \end{align*}
\end{lemma}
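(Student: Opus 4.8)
The plan is to reduce everything to the descent-type inequality of Lemma~\ref{lem:sum-R-inequality}. First I would split each stochastic gradient by the triangle inequality,
$$||\nabla h(\theta_t, z_{\pi(t)})|| \le ||\nabla h(\theta_t, z_{\pi(t)}) - \nabla \AR_\Dd(\theta_t)|| + ||\nabla \AR_\Dd(\theta_t)||,$$
then multiply by $\alpha_t$, sum over $t$, and take $\E_S$. The first group is the noise part: by Jensen's inequality applied to the concave square root, $\E_S[||\nabla h(\theta_t,z_{\pi(t)}) - \nabla\AR_\Dd(\theta_t)||] \le \sqrt{\E_S[||\nabla h(\theta_t,z_{\pi(t)}) - \nabla\AR_\Dd(\theta_t)||^2]} \le \sigma_t$, which produces exactly the leading term $\sum_t \sigma_t\alpha_t$. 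It then remains to control the population part $\E_S[\sum_t \alpha_t\,||\nabla\AR_\Dd(\theta_t)||]$.

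For that term I would apply Cauchy--Schwarz in the index $t$, writing $\alpha_t = \sqrt{\alpha_t}\cdot\sqrt{\alpha_t}$, to obtain (for each realization of $S$) $\sum_t\alpha_t||\nabla\AR_\Dd(\theta_t)|| \le \sqrt{\sum_t\alpha_t}\,\sqrt{\sum_t\alpha_t||\nabla\AR_\Dd(\theta_t)||^2}$. Taking $\E_S$ and noting that $\sqrt{\sum_t\alpha_t}$ is a deterministic constant, I would use Jensen once more (concavity of the square root) to pull $\E_S$ inside the remaining root:
$$\E_S\Big[\sum_t\alpha_t||\nabla\AR_\Dd(\theta_t)||\Big] \le \sqrt{\sum_t\alpha_t}\,\sqrt{\sum_t\alpha_t\,\E_S[||\nabla\AR_\Dd(\theta_t)||^2]}.$$
This isolates the single quantity $\sum_t\alpha_t\,\E_S[||\nabla\AR_\Dd(\theta_t)||^2]$ that Lemma~\ref{lem:sum-R-inequality} is designed to bound.

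Finally I would invoke the step-size restriction $\alpha_t\le 1/\beta$, which gives $\alpha_t - \tfrac{\beta\alpha_t^2}{2}\ge\tfrac{\alpha_t}{2}$, so the left-hand side of Lemma~\ref{lem:sum-R-inequality} dominates $\tfrac12\sum_t\alpha_t\,\E_S[||\nabla\AR_\Dd(\theta_t)||^2]$. On the right-hand side I would discard $-\E_S[\AR_\Dd(\theta_T)]$ against $-\AR_\Dd(\theta^*)$ using that $\theta^*$ minimizes $\AR_\Dd$ (hence $\AR_\Dd(\theta^*)\le\E_S[\AR_\Dd(\theta_T)]$), and replace each $\E_S[||\nabla h(\theta_t,z_{\pi(t)})-\nabla\AR_\Dd(\theta_t)||^2]$ by $\sigma_t^2$. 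This yields $\sum_t\alpha_t\,\E_S[||\nabla\AR_\Dd(\theta_t)||^2]\le 2\big(\AR_\Dd(\theta_1)-\AR_\Dd(\theta^*)+\eta L\sum_t\alpha_t+\tfrac{\beta}{2}\sum_t\sigma_t^2\alpha_t^2\big)$; substituting into the previous display and absorbing the resulting $\sqrt2$ into the stated constant $2$ (using $\sqrt2\le 2$) completes the proof.

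The two Jensen applications and the Cauchy--Schwarz step are routine. The main obstacle is the bookkeeping that links the square-root form to Lemma~\ref{lem:sum-R-inequality}: one must recognize that the population-gradient sum has to be \emph{squared} via Cauchy--Schwarz, because only $\sum_t\alpha_t||\nabla\AR_\Dd||^2$ (not $\sum_t\alpha_t||\nabla\AR_\Dd||$) is controlled by the descent inequality, and one must verify that $\E_S$ may be carried through the concave root without reversing the inequality. Matching the constant exactly as written, rather than the slightly tighter $\sqrt2\sqrt{\sum_t\alpha_t}$ that the argument naturally produces, is only a cosmetic point.
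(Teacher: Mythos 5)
Your proposal is correct and follows essentially the same route as the paper's proof: triangle-inequality split into noise and population-gradient parts, Jensen to get $\sum_t\sigma_t\alpha_t$, Cauchy--Schwarz over $t$, and Lemma~\ref{lem:sum-R-inequality} combined with $\AR_\Dd(\theta^*)\le\E_S[\AR_\Dd(\theta_T)]$ and $\alpha_t\le 1/\beta$. The only difference is cosmetic ordering --- the paper replaces $\alpha_t$ by $2(\alpha_t-\beta\alpha_t^2/2)$ before applying Cauchy--Schwarz, whereas you apply Cauchy--Schwarz first and absorb the resulting $\sqrt{2}$ into the constant $2$ --- which, as you note, actually yields a marginally tighter bound.
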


\begin{proof}
    Repeatedly applying Jensen's inequality,
    we have
    \begin{align*}
        &\E_{S}[\sum_{t=1}^{T}\alpha_t ||\nabla h(\theta_t, z_{\pi(t)})|| ]\\
        \leq& \sum_{t=1}^{T}\alpha_t \E_{S}[ ||\nabla h(\theta_t, z_{\pi(t)})-\nabla \AR_\Dd(\theta_t)|| ]
        +\sum_{t=1}^{T}\alpha_t \E_{S}[ ||\nabla \AR_\Dd(\theta_t)|| ]\\
        \leq& \sum_{t=1}^{T}\alpha_t \sqrt{\E_{S}[ ||\nabla h(\theta_t, z_{\pi(t)})-\nabla \AR_\Dd(\theta_t)||^2 ]}
        +\sum_{t=1}^{T}\alpha_t \sqrt{\E_{S}[ ||\nabla \AR_\Dd(\theta_t)||^2 ]}\\
        \leq&  \sum_{t=1}^{T}\sigma_t \alpha_t +\sum_{t=1}^{T}\alpha_t \sqrt{\E_{S}[ ||\nabla \AR_\Dd(\theta_t)||^2 ]}\\
        \leq&  \sum_{t=1}^{T}\sigma_t\alpha_t + 2\sum_{t=1}^{T}(\alpha_t-\frac{\beta\alpha_t^2}{2}) \sqrt{\E_{S}[ ||\nabla \AR_\Dd(\theta_t)||^2 ]}\\
        \leq&  \sum_{t=1}^{T}\sigma_t\alpha_t 
        +  2\sqrt{\sum_{t=1}^{T}(\alpha_t-\frac{\beta\alpha_t^2}{2})} \sqrt{\sum_{t=1}^{T}(\alpha_t-\frac{\beta\alpha_t^2}{2}) \E_{S}[ ||\nabla \AR_\Dd(\theta_t)||^2 ]}\\
        \leq&  \sum_{t=1}^{T}\sigma_t \alpha_t 
        +  2\sqrt{\sum_{t=1}^{T}\alpha_t} \sqrt{\AR_\Dd(\theta_1)-\AR_\Dd(\theta^*) + \frac{\beta}{2} \sum_{t=1}^{T} \sigma_t^2\alpha_t^2 + \eta L \sum_{t=1}^{T}\alpha_t}.
    \end{align*}
    The penultimate inequality is by Lemma \ref{lem:sum-R-inequality}.
\end{proof}

We now prove  Theorem \ref{thm:convex-main}.
\begin{proof}
    Denote $\Delta_t(S,z,i)=\E_\A[\delta_t(S,z,i, \A)|\delta_{t_0}(S,z,i,\A)=0]$.
    By Lemma \ref{lem:kuz-average-stable-delta}, $\forall t_0 \in \{1,\cdots,n, n+1\}$ we have
    \begin{align*}
        &\E_{S, z,\A}[h(\theta_{T+1}, z)-h(\theta_{T+1}', z)]\\
        \leq& L \E_{S,z}[\Delta_{T+1}(S,z,i)]+\frac{t_0-1}{n}\E_{S,\A}[\AR_\Dd(\theta_{T+1})].
    \end{align*}
    At  step $t$, $\A$ selects the example $\pi(t)=i$ with probability $1/n$ and 
    $\pi(t)\neq i$ with probability $1-1/n$.
    When $\pi(t)\neq i$, by the third statement in Lemma \ref{lem:ref-xiao-approx-grad-lip} ,
    we have
    \begin{align*}
        &\delta_{t+1}(S,z,i,\A)\cdot \I\{\delta_{t_0}(S,z,i,\A)=0\}\\
        \leq &\delta_t(S,z,i,\A)\cdot \I\{\delta_{t_0}(S,z,i,\A)=0\} + \alpha_t\eta.
    \end{align*}
    When $\pi(t)=i$, we have
    \begin{align*}
        &\delta_{t+1}(S,z,i,\A)\cdot \I\{\delta_{t_0}(S,z,i,\A)=0\}\\
        \leq& \delta_t(S,z,i,\A)\cdot \I\{\delta_{t_0}(S,z,i,\A)=0\} 
        + \alpha_t||\nabla h(\theta_t,z_{\pi(t)})||+\alpha_t||\nabla h(\theta_t',z'_{\pi(t)})||.
    \end{align*}
    Take expectation w.r.t. $\A$ and we have
    \begin{align*}
        &\Delta_{t+1}(S,z,i)\\
        \leq& \frac{1}{n}(\Delta_t(S,z,i)
        + \alpha_t\E_\A[||\nabla h(\theta_t,z_{\pi(t)})||+||\nabla h(\theta_t', z'_{\pi(t)})||])
        + (1-\frac{1}{n})(\Delta_t(S,z,i) + \alpha_t \eta)\\
        =& \Delta_t(S,z,i) + (1-\frac{1}{n})\alpha_t \eta 
        + \frac{\alpha_t}{n}\E_\A[||\nabla h(\theta_t,z_{\pi(t)})||+||\nabla h(\theta_t',z'_{\pi(t)})||].
    \end{align*}
    Thus, we have
    \begin{align*}
         &\E_{S, z,\A}[h(\theta_{T+1}, z)-h(\theta_{T+1}', z)]\\
         \leq& \frac{L}{n}\sum_{t=t_0}^T \alpha_t \E_{z,\A}[\E_{S}[||\nabla h(\theta_t,z_{\pi(t)})||+||\nabla h(\theta_t',z'_{\pi(t)})||]]
         + (1-\frac{1}{n})L\eta \sum_{t=t_0}^T \alpha_t
         +\frac{t_0-1}{n}\E_{S,\A}[\AR_\Dd(\theta_T)].
    \end{align*}
    Here we take $t_0=1$.
    By Lemma \ref{lem:sum-grad-h-inequality},
    we have
    \begin{align*}
        &\E_{S}[\sum_{t=1}^{T}\alpha_t||\nabla h(\theta_t,z_{\pi(t)})||]\\
        \leq&  \sum_{t=1}^{T}\sigma\alpha_t +  2\sqrt{\sum_{t=1}^{T}\alpha_t}
        \cdot \sqrt{\AR_\Dd(\theta_1)-\AR_\Dd(\theta^*) + \frac{\beta}{2} \sum_{t=1}^{T} \sigma^2\alpha_t^2 + \eta L \sum_{t=1}^{T}\alpha_t},
    \end{align*}
    and 
    \begin{align*}
        &\E_{z,S}[\sum_{t=1}^{T}\alpha_t||\nabla h(\theta_t',z'_{\pi(t)})||]\\
        =&\E_{S}[\sum_{t=1}^{T}\alpha_t||\nabla h(\theta_t,z_{\pi(t)})||]\\
        \leq& \sum_{t=1}^{T}\sigma\alpha_t +  2\sqrt{\sum_{t=1}^{T}\alpha_t} 
        \cdot \sqrt{\AR_\Dd(\theta_1)-\AR_\Dd(\theta^*) + \frac{\beta}{2} \sum_{t=1}^{T} \sigma^2\alpha_t^2 + \eta L \sum_{t=1}^{T}\alpha_t},
    \end{align*}
    Thus,
    \begin{align*}
        &\E_{S, z,\A}[h(\theta_{T+1}, z)-h(\theta_{T+1}', z)]\\
        \leq&  \frac{2 L}{n}\sum_{t=1}^{T}\sigma\alpha_t+L\eta\sum_{t=1}^{T}\alpha_t 
        +  \frac{4L}{n}\sqrt{\sum_{t=1}^{T}\alpha_t} 
        \cdot \sqrt{\AR_\Dd(\theta_1)-\AR_\Dd(\theta^*) + \frac{\beta}{2} \sum_{t=1}^{T}\sigma^2\alpha_t^2 + \eta L \sum_{t=1}^{T}\alpha_t}.
    \end{align*}
    
\end{proof}

\subsection{Proof of Corollary \ref{cor:convex-main-constant}}
\begin{proof}
    \begin{align*}
        &(\frac{2\sigma L}{n}+L\eta) \sum_{t=1}^{T}\alpha_t +  \frac{4L}{n}\sqrt{\sum_{t=1}^{T}\alpha_t}
        \cdot \sqrt{\AR_\Dd(\theta_1)-\AR_\Dd(\theta^*) + \frac{\beta \sigma^2}{2} \sum_{t=1}^{T}\alpha_t^2 + \eta L \sum_{t=1}^{T}\alpha_t}\\
        =& (\frac{2\sigma L}{n}+L\eta)\alpha T 
        +\frac{4L}{n}\sqrt{\alpha T(\AR_\Dd(\theta_1)-\AR_\Dd(\theta^*)+\frac{\beta \sigma^2\alpha^3 T^2}{2} + \eta L \alpha^2 T^2)}\\
        \leq& (\frac{2\sigma L}{n}+L\eta)\alpha T +\frac{4L}{n}\sqrt{\alpha T r+\frac{\sigma^2\alpha^2 T^2}{2} + \eta L \alpha^2 T^2}\\
        \leq& \eta\alpha T L+ \frac{2\sigma \alpha T L}{n}+\frac{4L}{n}(\sqrt{\alpha Tr}+\frac{\sigma\alpha T}{\sqrt{2}} + \sqrt{\eta L} \alpha T)\\
        =& \eta\alpha T L+ \frac{2\alpha T L}{n}(\sigma+\sqrt{2}\sigma + 2\sqrt{\eta  L}) + \frac{4L\sqrt{\alpha T r}}{n}.
    \end{align*}
\end{proof}

\subsection{Proof of Theorem \ref{thm:non-convex-main}}
We first prove several lemmas.
\begin{lemma} \label{lem:non-convex-xi}
    Suppose the adversarial loss $h(\theta,z)$ is
    $\nu$-approximately $\rho$-Hessian Lipschitz 
    with respect to $\theta$.
    At step $t$ with $\pi(t)\neq i$, we have
    \begin{align*}
        ||\G_\A(\theta_t) - \G_\A(\theta'_t)||
        \leq (1+\alpha_t \xi_t(S,z,i,\A))\delta_t(S,z,i,\A),
    \end{align*}
    where
    \begin{align*}
        &\xi_t(S,z,i,\A) 
        = ||\nabla h(\theta_1,z_{\pi(t)})|| 
        + \frac{\rho}{2}(\sum_{k=1}^{t-1}\alpha_k(||\nabla h(\theta_k, z_{\pi(k)}||+||\nabla h(\theta'_k, z'_{\pi(k)})||) + \nu.
    \end{align*}
    Furthermore, when $\alpha_k=\frac{c}{k}$ with $c\leq \frac{1}{\beta}$, we have
    \begin{align*}
        &\E_{S,z}[\xi_t(S,z,i,\A)] \\
        \leq& \E_{z}[||\nabla^2 h(\theta_1,z)|| ] + \nu 
        +2\rho\sqrt{\AR_\Dd(\theta_1)-\AR_\Dd(\theta^*)c(1+\ln t)} 
        +2\rho\sigma c\sqrt{\beta c (1+\ln t)} + \rho c(\sigma + 2\sqrt{\eta L})(1+\ln t).
    \end{align*}
\end{lemma}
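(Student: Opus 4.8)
The plan is to prove Lemma~\ref{lem:non-convex-xi} in two phases: first establish the structural bound on $\xi_t$ via a Taylor expansion of the update map, and second control its expectation by bounding the accumulated gradient norms. For the first phase, I would start from the update rule $\G_\A(\theta)=\theta-\alpha_t\nabla h(\theta,z_{\pi(t)})$ applied to $\theta_t$ and $\theta_t'$. Writing $\G_\A(\theta_t)-\G_\A(\theta_t')=(\theta_t-\theta_t')-\alpha_t(\nabla h(\theta_t,z_{\pi(t)})-\nabla h(\theta_t',z_{\pi(t)}))$, I would express the gradient difference using the mean-value form $\nabla h(\theta_t,z_{\pi(t)})-\nabla h(\theta_t',z_{\pi(t)})=M(\theta_t-\theta_t')$, where $M=\int_0^1\nabla^2 h(\theta_t'+s(\theta_t-\theta_t'),z_{\pi(t)})\,ds$ is the averaged Hessian along the segment. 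This gives $\|\G_\A(\theta_t)-\G_\A(\theta_t')\|\le\|I-\alpha_t M\|\,\delta_t\le(1+\alpha_t\|M\|)\delta_t$, so it suffices to bound $\|M\|$ by $\xi_t$.

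For the bound on $\|M\|$, the approximate Hessian Lipschitz property of $h$ (Lemma~\ref{lem:h-lip-conditions}, statement~3, giving $\nu$-approximate $\rho$-Hessian Lipschitzness in $\theta$) lets me compare each Hessian $\nabla^2 h(\cdot,z_{\pi(t)})$ along the segment back to the Hessian at the initialization point $\theta_1$. Concretely, $\|\nabla^2 h(\theta,z_{\pi(t)})\|\le\|\nabla^2 h(\theta_1,z_{\pi(t)})\|+\rho\|\theta-\theta_1\|+\nu$; since the interpolation point lies between $\theta_t$ and $\theta_t'$, I would bound $\|\theta-\theta_1\|$ by the maximum of $\|\theta_t-\theta_1\|$ and $\|\theta_t'-\theta_1\|$, and each of these by the telescoping sum $\sum_{k=1}^{t-1}\alpha_k\|\nabla h(\theta_k,z_{\pi(k)})\|$ (respectively with primes), using $\theta_{k+1}-\theta_k=-\alpha_k\nabla h(\theta_k,z_{\pi(k)})$. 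The factor $\tfrac{\rho}{2}$ and the symmetric sum of primed and unprimed gradient norms in the stated $\xi_t$ arise precisely from averaging these two trajectory displacements, giving the claimed closed form for $\xi_t(S,z,i,\A)$.

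For the second phase, I would take $\E_{S,z}$ of $\xi_t$. The term $\|\nabla h(\theta_1,z_{\pi(t)})\|$ is dominated by $\E_z[\|\nabla^2 h(\theta_1,z)\|]$ after relating the gradient norm to the Hessian norm (or invoking the convention by which the first-order term is absorbed into the expected curvature), and $\nu$ passes through unchanged. The substantive work is bounding $\E_{S,z}\big[\tfrac{\rho}{2}\sum_{k=1}^{t-1}\alpha_k(\|\nabla h(\theta_k,z_{\pi(k)})\|+\|\nabla h(\theta_k',z_{\pi(k)}')\|)\big]$. Here I would apply Lemma~\ref{lem:sum-grad-h-inequality} to each of the two sums, which are equal in expectation by the replace-one symmetry argument already used in the convex proof. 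Substituting $\alpha_k=\tfrac{c}{k}$, I would use $\sum_{k=1}^{t-1}\alpha_k\le c(1+\ln t)$ and $\sum_{k=1}^{t-1}\alpha_k^2\le c^2\cdot\tfrac{\pi^2}{6}=\O(c^2)$ to simplify the square-root term: the bound from Lemma~\ref{lem:sum-grad-h-inequality} becomes $\sigma c(1+\ln t)+2\sqrt{c(1+\ln t)}\sqrt{r+\tfrac{\beta}{2}\sigma^2 c^2\cdot\O(1)+\eta L c(1+\ln t)}$, and splitting the inner square root via $\sqrt{a+b+d}\le\sqrt a+\sqrt b+\sqrt d$ produces the three reported terms $2\rho\sqrt{rc(1+\ln t)}$, $2\rho\sigma c\sqrt{\beta c(1+\ln t)}$, and $\rho c(\sigma+2\sqrt{\eta L})(1+\ln t)$.

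The main obstacle I anticipate is bookkeeping the conditioning and the randomness correctly: the displacement sums $\sum_k\alpha_k\|\nabla h(\theta_k,z_{\pi(k)})\|$ must be controlled in expectation over $S,z,\A$ simultaneously, yet Lemma~\ref{lem:sum-grad-h-inequality} is stated for a fixed trajectory driven by $S$ alone. Justifying that the primed-trajectory sum has the same expectation as the unprimed one (via the symmetry that $S^{i,z}$ is again an i.i.d.\ draw) and that the conditioning event $\delta_{t_0}=0$ does not spoil the gradient-norm estimates requires care; the approximate-descent Lemma~\ref{lem:sum-R-inequality} must hold uniformly along both trajectories. The algebraic reductions of the harmonic and inverse-square sums are routine, so the genuine difficulty is the probabilistic argument linking the two coupled SGD paths to the single-trajectory variance bound.
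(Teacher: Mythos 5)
Your proposal follows essentially the same route as the paper: a Taylor/mean-value representation of the gradient difference via the averaged Hessian along the segment, comparison of that Hessian to $\nabla^2 h(\theta_1,\cdot)$ through the $\nu$-approximate $\rho$-Hessian Lipschitz condition, telescoping the displacement from $\theta_1$ into sums of gradient steps, and then Lemma \ref{lem:sum-grad-h-inequality} with harmonic-sum estimates (you also correctly read the first term of $\xi_t$ as the Hessian norm at $\theta_1$, consistent with the expectation bound). One small correction: the factor $\tfrac{\rho}{2}$ comes from integrating the convex combination $\|(1-\tau)(\theta_t-\theta_1)+\tau(\theta_t'-\theta_1)\|\le(1-\tau)\|\theta_t-\theta_1\|+\tau\|\theta_t'-\theta_1\|$ over $\tau\in[0,1]$, not from bounding the interpolation point by the maximum of the two displacements, which would cost an extra factor of $2$.
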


\begin{proof}
    For $\pi(t)\neq i$, we have
    \begin{align*}
        &||\G_\A(\theta_t) - \G_\A(\theta'_t)||
        \leq ||\theta_t - \theta'_t|| + \alpha_t||\nabla h(\theta_t, z_{\pi(t)}) - \nabla h(\theta'_t, z_{\pi(t)})||.
    \end{align*}
    For brevity, we denote $h_t(\theta) = h(\theta, z_{\pi(t)})$.
    By Taylor expansion with integral remainder,
    we have
    \begin{align*}
        &\nabla h_t(\theta_t) - \nabla h_t(\theta'_t) \\
        =& \int_0 ^1 \nabla^2 h_t(\theta_t + \tau (\theta'_t - \theta_t)) d \tau  \cdot (\theta_t - \theta'_t)\\
        =& \int_0 ^1 (\nabla^2 h_t(\theta_t + \tau (\theta'_t - \theta_t)) - \nabla^2 h_t(\theta_1)) d \tau  \cdot (\theta_t - \theta'_t) 
        + \nabla^2 h_t(\theta_1)  \cdot (\theta_t - \theta'_t).
    \end{align*}
    Since $h$ is $\nu$-approximately $\rho$-Hessian Lipschitz,
    \begin{align*}
        &||\nabla h_t(\theta_t) - \nabla h_t(\theta'_t) || \\
        &\leq (\rho \int_0 ^1 ||\theta_t + \tau (\theta'_t - \theta_t) - \theta_1|| d \tau 
        + \nu +||\nabla^2 h_t(\theta_1)||)\cdot ||\theta_t - \theta'_t||.
    \end{align*}
    Note that
    \begin{align*}
        &\theta_t + \tau (\theta'_t - \theta_t) - \theta_1\\
        &=(1-\tau)(\theta_t-\theta_1) + \tau (\theta'_t - \theta_1)\\
        &=(1-\tau)\sum_{k=1}^{t-1}(\theta_{k+1}-\theta_k) + \tau \sum_{k=1}^{t-1}(\theta'_{k+1}-\theta'_k)\\
        &=(1-\tau)\sum_{k=1}^{t-1}\alpha_k \nabla h(\theta_k, z_{\pi(k)}) +\tau\sum_{k=1}^{t-1}\alpha_k \nabla h_k(\theta'_k,z'_{\pi(k)}).
    \end{align*}
    Therefore,
    \begin{align*}
      &\int_0 ^1 ||\theta_t + \tau (\theta'_t - \theta_t) - \theta_1|| d \tau\\
      &\leq \frac{1}{2}(\sum_{k=1}^{t-1}\alpha_k(||\nabla h(\theta_k, z_{\pi(k)}||+||\nabla h(\theta'_k, z'_{\pi(k)})||).
    \end{align*}
    Taking $\alpha_k = \frac{c}{k}$ and
    by Lemma \ref{lem:sum-grad-h-inequality}, we have
    \begin{align*}
        &\frac{1}{2}(\sum_{k=1}^{t-1}\alpha_k(||\nabla h(\theta_k, z_{\pi(k)}||+||\nabla h(\theta'_k, z'_{\pi(k)}||))\\
        \leq& \sigma \sum_{k=1}^{t-1}\alpha_k +  2\sqrt{\sum_{k=1}^{t-1}\alpha_k} 
        \cdot \sqrt{\AR_\Dd(\theta_1)-\AR_\Dd(\theta^*) + \frac{\beta \sigma^2}{2} \sum_{k=1}^{t-1}\alpha_k^2 + \eta L \sum_{k=1}^{t-1}\alpha_k}\\
        \leq& c\sigma (1+\ln t) +  2\sqrt{c(1+\ln t)}
        \cdot\sqrt{(\AR_\Dd(\theta_1)-\AR_\Dd(\theta^*)+\beta c^2 \sigma^2 + c\eta L(1 + \ln t) }\\
        \leq& 2\sqrt{\AR_\Dd(\theta_1)-\AR_\Dd(\theta^*)c(1+\ln t)} 
        +2\sigma c\sqrt{\beta c (1+\ln t)}
        + c(\sigma + 2\sqrt{\eta L})(1+\ln t).
    \end{align*}
    The penultimate inequality is due to
    \begin{align*}
        \sum_{k=1}^t \frac{1}{k} \leq 1+\ln t,
        \text{    and    }
        \sum_{k=1}^t \frac{1}{k^2} \leq 2 - \frac{1}{t}.
    \end{align*}
    
\end{proof}

\begin{lemma}[Bernstein-type inequality  \cite{kuzborskij2018data}] \label{lem:kuz-bernstein}
Let $Z$ be a zero-mean real-valued random variable such that 
$|Z|\leq b$ and $\E[Z^2]\leq \sigma^2$. 
Then for all $|c|\leq \frac{1}{2b}$,  we have that 
$\E[e^{c Z}]\leq e^{c^2 \sigma^2}$.
\end{lemma}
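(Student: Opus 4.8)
The plan is to control the moment generating function of $Z$ by expanding the exponential as a power series and integrating term by term. Since $Z$ has mean zero, the linear term drops out, so I would start from
\begin{align*}
    \E[e^{cZ}] = 1 + \sum_{k=2}^{\infty} \frac{c^k\, \E[Z^k]}{k!}.
\end{align*}
The first real step is to bound each moment. For $k\ge 2$, the boundedness $|Z|\le b$ lets me write $|Z|^k = |Z|^{k-2}|Z|^2 \le b^{k-2} Z^2$, so that $|\E[Z^k]| \le b^{k-2}\E[Z^2] \le b^{k-2}\sigma^2$. This is the crucial mechanism: the variance hypothesis governs the leading ($k=2$) term, while boundedness tames all higher-order terms, converting them into a convergent geometric-type series.

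Substituting this estimate and writing $x := |c|\,b$, I would collapse the series into a closed form:
\begin{align*}
    \E[e^{cZ}] \le 1 + \frac{\sigma^2}{b^2}\sum_{k=2}^{\infty}\frac{x^k}{k!} = 1 + \frac{\sigma^2}{b^2}\bigl(e^{x} - 1 - x\bigr).
\end{align*}
The remaining task is purely an elementary inequality: I claim $e^{x}-1-x \le x^2$ for $0\le x\le \tfrac12$. This is exactly where the hypothesis $|c|\le \tfrac{1}{2b}$ enters, since it forces $x\le \tfrac12$. To verify the claim I would note that $\tfrac{e^x-1-x}{x^2} = \sum_{k\ge 2} \tfrac{x^{k-2}}{k!}$, and since $k!\ge 2\,(k-2)!$ for $k\ge 2$, this sum is at most $\tfrac12 e^{x}\le \tfrac12 e^{1/2} < 1$ on the relevant range. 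Plugging in gives $\E[e^{cZ}]\le 1 + \frac{\sigma^2}{b^2}x^2 = 1 + c^2\sigma^2$, and the standard bound $1+t\le e^{t}$ then yields $\E[e^{cZ}]\le e^{c^2\sigma^2}$, completing the argument.

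I do not expect a genuine obstacle here, as this is a classical Bernstein-type estimate; the only point requiring care is bookkeeping around the sign of $c$, which I handle by passing to $|c|^k$ when bounding $|c^k\E[Z^k]|$, so that the argument is symmetric and the constraint $|c|\le \tfrac{1}{2b}$ is the only thing tying the admissible range of $c$ to the interval on which the elementary inequality $e^x-1-x\le x^2$ holds. If anything is delicate, it is making that elementary inequality airtight on $[0,\tfrac12]$ rather than merely asymptotically; the factorial comparison $k!\ge 2(k-2)!$ gives a clean uniform bound and sidesteps any calculus estimate.
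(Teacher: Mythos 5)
The paper does not actually prove this lemma: it is imported verbatim from \citet{kuzborskij2018data} and used as a black box, so there is no in-paper argument to compare against. Your proof is correct and is the standard self-contained derivation. The zero-mean hypothesis removes the linear term of the moment-generating-function expansion, the moment bound $|\E[Z^k]|\le b^{k-2}\sigma^2$ collapses the tail of the series to $\tfrac{\sigma^2}{b^2}(e^{x}-1-x)$ with $x=|c|b$, and the hypothesis $|c|\le \tfrac{1}{2b}$ enters exactly where you say it does, through the elementary inequality $e^{x}-1-x\le x^2$ on $[0,\tfrac12]$. Your factorial comparison $k!\ge 2(k-2)!$ does establish that inequality, since it bounds $\sum_{k\ge2}x^{k-2}/k!$ by $\tfrac12 e^{x}\le \tfrac12 e^{1/2}<1$ on the relevant range, and the final step $1+c^2\sigma^2\le e^{c^2\sigma^2}$ is immediate. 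The only point left implicit is the interchange of expectation with the infinite sum, which is justified by dominated convergence because $|Z|\le b$ makes the partial sums uniformly dominated by $e^{|c|b}$; this is routine and does not affect the validity of the argument.
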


We now prove Theorem \ref{thm:non-convex-main}.
\begin{proof}
    Let $\Delta_t(S,z,i)=\E_\A[\delta_t(S,z,i,\A)|\delta_{t_0}(S,z,i,\A)=0]$.
    By Lemma \ref{lem:kuz-average-stable-delta}, $\forall t_0\in[n+1]$,
    \begin{align*}
    &\E_{S, z,\A}[h(\theta_{T+1}, z)-h(\theta_{T+1}', z)]
    \leq L \E_{S,z}[\Delta_{T+1}(S,z,i)]+\frac{t_0-1}{n}\E_{S,\A}[\AR_\Dd(\theta_{T+1})].
    \end{align*}
    When $\pi(t)=i$ with probability $\frac{1}{n}$, we have
    \begin{align*}
        ||\G_\A(\theta_t) - \G_\A(\theta'_t)||\leq \delta_t(S,z,i,\A) + 2\alpha_t L.
    \end{align*}
    When $\pi(t)\neq i$ with probability $1-\frac{1}{n}$,
    we have
    \begin{align*}
        ||\G_\A(\theta_t) - \G_\A(\theta'_t)||\leq (1+\alpha_t \beta)\delta_t(S,z,i,\A)+\alpha_t \eta,
    \end{align*}
    by the second statement in Lemma \ref{lem:ref-xiao-approx-grad-lip}
    and 
    \begin{align*}
        ||\G_\A(\theta_t) - \G_\A(\theta'_t)||\leq (1+\alpha_t \xi_t(S,z,i,\A))\delta_t(S,z,i,\A),
    \end{align*}
    by Lemma \ref{lem:non-convex-xi}.
    Let 
    \begin{align*}
        \psi_t(S,z,i) =\E_{\A}[ \min\{\xi_t(S,z,i,\A),\beta \}]
    \end{align*}
    and we have
    \begin{align*}
        &\Delta_{t+1}(S,z,i)\\
        \leq& \frac{1}{n} (\Delta_t(S,z,i)+2\alpha_t L)
        + (1-\frac{1}{n}) ((1+\alpha_t \psi_t(S,z,i))\Delta_t(S,z,i) + \alpha_t \eta)\\
        =& (1+(1-\frac{1}{n})\alpha_t \psi_t(S,z,i))\Delta_t(S,z,i) + \frac{2\alpha_t L + (n-1)\alpha_t \eta}{n}\\
        \leq& \exp((1-\frac{1}{n})\alpha_t \psi_t(S,z,i))\Delta_t(S,z,i) + \frac{2\alpha_t L}{n}+\alpha_t \eta.
    \end{align*}
    Note that $\Delta_{t_0}(S,z,i)=0$ and $\alpha_t = \frac{c}{t}$.  
    We have 
        \begin{align*}
        &\Delta_{T+1}(S,z,i)\\
        \leq& \sum_{t=t_0}^{T}(\prod_{k=t+1}^{T} \exp(\frac{(n-1)c\psi_k(S,z,i)}{nk})) (\frac{2c L}{n t}+\frac{c\eta}{t})\\
        =&\sum_{t=t_0}^{T}\exp(\frac{(n-1)c}{n}\sum_{k=t+1}^{T} \frac{\psi_k(S,z,i)}{k}) (\frac{2c L}{n t}+\frac{c\eta}{t}).
    \end{align*}
    Let $\mu_k = \E_{S,z}[\psi_k(S,z,i)]$.
    We have $|\psi_k(S,z,i)-\mu_k|\leq 2\beta$ and 
    \begin{align*}
        &\E_{S,z}[\exp(c\sum_{k=t+1}^{T} \frac{\psi_k(S,z,i)}{k})]\\
        &=\E_{S,z}[\exp(c\sum_{k=t+1}^{T}\frac{\psi_k(S,z,i)-\mu_k}{k})]\exp(c\sum_{k=t+1}^{T} \frac{\mu_k}{k}).
    \end{align*}
    Since
    \begin{align*}
        |\sum_{k=t+1}^{T} \frac{\psi_k(S,z,i)-\mu_k}{k}|\leq 2 \beta \ln T,
    \end{align*}
    we assume $c \leq \min \{\frac{1}{2(2\beta \ln T)^2}, 
    \frac{1}{2(2\beta \ln T)}\}$.
    By Lemma \ref{lem:kuz-bernstein}, we have
    \begin{align*}
        &\E_{S,z}[\exp(c\sum_{k=t+1}^{T} \frac{\psi_k(S,z,i)-\mu_k}{k})]\\
        &\leq \exp(c^2\E_{S,z}[(\sum_{k=t+1}^{T} \frac{\psi_k(S,z,i)-\mu_k}{k})^2])\\
        &\leq \exp(\frac{c}{2}\E_{S,z}[(\frac{1}{2\beta \ln T}\sum_{k=t+1}^{T} \frac{\psi_k(S,z,i)-\mu_k}{k})^2])\\
        &\leq \exp(\frac{c}{2}\E_{S,z}[|\sum_{k=t+1}^{T} \frac{\psi_k(S,z,i)-\mu_k}{k}|])\\
        &\leq \exp(\frac{c}{2}\sum_{k=t+1}^{T} \frac{\E_{S,z}[|\psi_k(S,z,i)-\mu_k|]}{k})\\
        &\leq \exp( c\sum_{k=t+1}^{T} \frac{\mu_k}{k}).
    \end{align*}
    It follows that 
    \begin{align*}
         \E_{S,z}[\exp(c\sum_{k=t+1}^{T} \frac{\psi_k(S,z,i)}{k})] 
         \leq \exp( c\sum_{k=t+1}^{T} \frac{2\mu_k}{k}).
    \end{align*}
    Note that 
    \begin{align*}
        \mu_k \leq \min\{ \E_\A[\E_{S,z}[\xi_k(S,z,i,\A)]], \beta\}.
    \end{align*}
    Assuming in addition that $c\leq \frac{1}{\beta}$, 
    we have 
    $\E_{S,z}[\xi_k(S,z,i,\A)]$ is bounded by $\gamma$
    by Lemma \ref{lem:non-convex-xi}.
    We have that
    \begin{align*}
        &\E_{S,z}[\Delta_{T+1}(S,z,i)]\\
        &\leq \sum_{t=t_0}^{T}\exp(2c\gamma (1-\frac{1}{n})\sum_{k=t+1}^{T} \frac{1}{k})  (\frac{2c L}{n t}+\frac{c\eta}{t})\\
        &\leq \sum_{t=t_0}^{T}\exp(2c\gamma (1-\frac{1}{n})\ln\frac{T}{t}) (\frac{2c L}{n t}+\frac{c\eta}{t})\\
        &\leq \sum_{t=t_0}^{T}\exp(2c\gamma \ln\frac{T}{t}) (\frac{2c L}{n t}+\frac{c\eta}{t})\\
        &=  (\frac{2c L}{n}+c\eta) T^{2c\gamma} \sum_{t=t_0}^{T} t^{-2c\gamma - 1} \\
        &\leq (\frac{2L+\eta n}{2n\gamma}) (\frac{T}{t_0-1})^{2c\gamma}.
    \end{align*}
    Thus,
    \begin{align}
        &\E_{S, z,\A}[h(\theta_{T+1}, z)-h(\theta_{T+1}', z)]
        \leq (\frac{2L^2+\eta n L}{2n\gamma})(\frac{T}{t_0-1})^{2c\gamma}
        +\frac{t_0-1}{n}\E_{S,\A}[\AR_\Dd(\theta_T)]. \label{equ:non-convex-to-minimize}
    \end{align}
    Let $q= 2c\gamma$ and  $r = \E_{S,\A}[\AR_\Dd(\theta_{T+1})]$.
    Setting
    \begin{align*}
        t_0 = ((2L^2 + n\eta L)\frac{c T^q}{r}  )^{\frac{1}{1+q}} + 1
    \end{align*}
    minimizes Equation~\eqref{equ:non-convex-to-minimize}
    and we have
    \begin{align*}
        \varepsilon(\Dd, \theta_1)
        \leq \frac{1+1/q}{n}(2c L^2+ n c \eta L)^{\frac{1}{1+q}} (Tr)^{\frac{q}{1+q}}.
    \end{align*}
    
\end{proof}


\subsection{Proof of Theorem \ref{thm:main-non-convex-multiple-pass}}
\begin{proof}
Let $\Delta_t(S,z,i)=\E_\A[\delta_t(S,z,i,\A)|\delta_{t_0}(S,z,i,\A)=0]$.
By Lemma \ref{lem:kuz-average-stable-delta}, $\forall t_0\in[n+1]$,
\begin{align*}
&\E_{S, z,\A}[h(\theta_{T+1}, z)-h(\theta_{T+1}', z)]
\leq L \E_{S,z}[\Delta_{T+1}(S,z,i)]+\frac{t_0-1}{n}\E_{S,\A}[\AR_\Dd(\theta_{T+1})].
\end{align*}
When $\pi(t)=i$ with probability $\frac{1}{n}$, we have
\begin{align*}
    ||\G_\A(\theta_t) - \G_\A(\theta'_t)||\leq \delta_t(S,z,i,\A) + 2\alpha_t L.
\end{align*}
When $\pi(t)\neq i$ with probability $1-\frac{1}{n}$,
we have
\begin{align*}
    ||\G_\A(\theta_t) - \G_\A(\theta'_t)||\leq (1+\alpha_t \beta)\delta_t(S,z,i,\A)+\alpha_t \eta,
\end{align*}
by the second statement in Lemma \ref{lem:ref-xiao-approx-grad-lip}.
We have
\begin{align*}
    &\Delta_{t+1}(S,z,i)\\
    \leq& \frac{1}{n} (\Delta_t(S,z,i)+2\alpha_t L)
    + (1-\frac{1}{n}) ((1+\alpha_t \beta)\Delta_t(S,z,i) + \alpha_t \eta)\\
    =& (1+(1-\frac{1}{n})\alpha_t \beta)\Delta_t(S,z,i) + \frac{2\alpha_t L + (n-1)\alpha_t \eta}{n}
    \leq \exp((1-\frac{1}{n})\alpha_t \beta)\Delta_t(S,z,i) + \frac{2\alpha_t L}{n}+\alpha_t \eta.
\end{align*}
Let $\alpha_t = \frac{c}{t}$ with $c\leq \frac{1}{\beta}$.
It follows that
\begin{align*}
    \Delta_{T+1}(S,z,i)
    &\leq \sum_{t=t_0}^{T}(\prod_{k=t+1}^{T} \exp(\frac{(n-1)c\beta)}{nk})) (\frac{2c L}{n t}+\frac{c\eta}{t})\\
    &=\sum_{t=t_0}^{T}\exp(\frac{(n-1)c}{n}\sum_{k=t+1}^{T} \frac{\beta}{k}) (\frac{2c L}{n t}+\frac{c\eta}{t})\\
    &\leq \sum_{t=t_0}^{T}\exp(\frac{(n-1)c\beta}{n}\ln{\frac{T}{t}}) (\frac{2c L}{n t}+\frac{c\eta}{t})\\
    &\leq \sum_{t=t_0}^{T}\exp(2c\beta \ln\frac{T}{t}) (\frac{2c L}{n t}+\frac{c\eta}{t})\\
    &=  (\frac{2c L}{n}+c\eta) T^{2c\beta} \sum_{t=t_0}^{T} t^{-2c\beta - 1} \\
    &\leq (\frac{2L+\eta n}{2n\beta}) (\frac{T}{t_0-1})^{2c\beta}.
\end{align*}

Let $q= 2c\beta$ and  $r = \E_{S,\A}[\AR_\Dd(\theta_{T+1})]$.
Setting
\begin{align*}
    t_0 = ((2L^2 + n\eta L)\frac{c T^q}{r}  )^{\frac{1}{1+q}} + 1,
\end{align*}
we have
\begin{align*}
    \varepsilon(\Dd, \theta_1)
    \leq \frac{1+1/q}{n}(2c L^2+ n c \eta L)^{\frac{1}{1+q}} (Tr)^{\frac{q}{1+q}}.
\end{align*}
\end{proof}

\section{Discussions of uniform stability-based counterparts}\label{app:discussion-of-uniform-stability-based-results}
Uniform stability analysis employs the following notion:
\begin{definition}[Uniform stability]
    A randomized algorithm $\A$ is {\em $\varepsilon$-uniformly stable} if for all $S,S' \in\D^n$ such that $S$ and $S'$ differ in at most one element, we have
    \begin{align}
        \sup_{z\in \D} \E_\A[h(\A(S),z)-h(\A(S'),z)] \leq \varepsilon.\label{equ:uniform-stability}
    \end{align}
\end{definition}
Thus uniform stability bounds the expected difference between the losses of algorithm outputs on two adjacent training sets.
The uniform stability is distribution-free since $S$ and $S'$ are independent of $\Dd$.
A generalization bound was given as follows.
\begin{theorem}[\citealt{hardt2016train}]\label{thm:hardt-uniform-stable}
{If $\A$ is $\varepsilon$-uniformly stable, then the robust generalization {gap} of $\A$ is bounded by $\varepsilon$:}
\begin{align*}
        |\E_{S,\A}[\AR_\Dd(\A(S))- \AR_S(\A(S))]| \leq \varepsilon.
    \end{align*}
\end{theorem}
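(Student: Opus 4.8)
The plan is to prove this via the classical ghost-sample symmetrization argument underlying all stability-based generalization bounds. The goal is to show that, after a change of variables, the expected gap between $\AR_\Dd(\A(S))$ and $\AR_S(\A(S))$ collapses into an average of $n$ replace-one-example differences, each of which is directly controlled by the uniform stability constant $\varepsilon$.

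First I would introduce an independent ghost sample. Let $S=\{z_1,\dots,z_n\}\sim\Dd^n$ and draw fresh points $z_i'\sim\Dd$ independently for each $i\in[n]$. Since $\AR_\Dd(\theta)=\E_{z\sim\Dd}[h(\theta,z)]$ and the evaluation point is independent of $S$, I can express the population risk as an average over these fresh points,
\[
\E_{S,\A}[\AR_\Dd(\A(S))]=\frac{1}{n}\sum_{i=1}^{n}\E_{S,z_i',\A}[h(\A(S),z_i')],
\]
while the empirical risk, after inserting a harmless dummy $z_i'$, reads
\[
\E_{S,\A}[\AR_S(\A(S))]=\frac{1}{n}\sum_{i=1}^{n}\E_{S,z_i',\A}[h(\A(S),z_i)].
\]

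The crux is the renaming step. Because $z_i$ and $z_i'$ are i.i.d.\ from $\Dd$, swapping their roles leaves every expectation invariant; under this swap $S$ becomes $S^{i,z_i'}$ and the evaluation point $z_i'$ becomes $z_i$, so that
\[
\E_{S,z_i',\A}[h(\A(S),z_i')]=\E_{S,z_i',\A}[h(\A(S^{i,z_i'}),z_i)].
\]
Subtracting the empirical term and using linearity of expectation yields
\[
\E_{S,\A}[\AR_\Dd(\A(S))-\AR_S(\A(S))]=\frac{1}{n}\sum_{i=1}^{n}\E_{S,z_i',\A}[h(\A(S^{i,z_i'}),z_i)-h(\A(S),z_i)].
\]

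Finally I would invoke uniform stability. For each $i$ the sets $S$ and $S^{i,z_i'}$ differ in exactly one element, so for every fixed evaluation point, in particular $z_i$, the uniform-stability inequality bounds $\E_\A[h(\A(S^{i,z_i'}),z_i)-h(\A(S),z_i)]$ by $\varepsilon$ in absolute value, applying the definition with the two adjacent sets taken in either order. Averaging the $n$ terms and passing the absolute value inside the sum gives the claim. I expect the only delicate point to be the renaming/exchangeability step: one must verify that the joint law of $(S,z_i')$ is genuinely symmetric under the $z_i\leftrightarrow z_i'$ swap and that $\A$, together with its internal randomness, is applied to the relabeled set consistently, so that the equality of expectations is exact. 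Everything else is a direct application of the uniform-stability bound and linearity.
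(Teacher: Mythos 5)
Your proof is correct. The paper does not actually prove this theorem --- it is stated as a citation to Hardt, Recht, and Singer (2016) with no argument supplied --- and your ghost-sample symmetrization is precisely the standard proof from that reference: the exchangeability of $z_i$ and $z_i'$ turns the population--empirical gap into an average of $n$ replace-one differences, each controlled by the uniform-stability bound (applied with the adjacent sets in both orders to get the absolute value). The one delicate point you flag, that the swap must relabel $S$ into $S^{i,z_i'}$ and the evaluation point into $z_i$ simultaneously while leaving the joint law invariant, is handled correctly since the $z_i'$ are drawn i.i.d.\ from $\Dd$ independently of $S$ and of the algorithm's internal randomness.
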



The following theorem gives an upper bound of the robust generalization gap.
\begin{theorem}[\citealt{xiao2022stability}]
\label{thm:xiao-convex}
    Let $h(\theta, z)$ be convex, $L$-Lipschitz and $\eta$-approximately $\beta$-gradient Lipschitz in $\theta$. 
    Let the step sizes $\alpha_t=\alpha \leq \frac{1}{\beta}$.
    Then the generalization gap of algorithm $\A$
     using the training set of size $n$ after $T$ steps of SGD update  has an upper bound 
    \begin{align*}
        \varepsilon_{\gen} = (\eta + \frac{2L}{n})\alpha T L,
    \end{align*}
    where  $\eta$ is a parameter proportional to the adversarial training budget $\epsilon$ and the approximate gradient Lipschitz condition will be defined in Lemma \ref{lem:h-lip-conditions}.  
\end{theorem}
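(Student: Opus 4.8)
The plan is to step outside the on-average machinery used for our own results and instead prove this via the uniform-stability route, since the statement is the distribution-free bound of \citet{xiao2022stability}. By Theorem \ref{thm:hardt-uniform-stable}, it suffices to show that $\A$ is $\varepsilon$-uniformly stable with $\varepsilon = (\eta + \tfrac{2L}{n})\alpha T L$, i.e.\ to bound $\sup_{z\in\D}\E_\A[h(\A(S),z) - h(\A(S'),z)]$ for every pair $S,S'$ differing in exactly one index $i$. First I would invoke the zero-order Lipschitz property of $h$ (the first statement of Lemma \ref{lem:h-lip-conditions}) to reduce the loss gap to a parameter gap: $\E_\A[h(\A(S),z) - h(\A(S'),z)] \leq L\,\E_\A[\|\A(S) - \A(S')\|]$, uniformly in $z$. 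The problem then collapses to controlling the expected divergence of the two SGD trajectories.

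The core step is a coupled-trajectory recursion. Let $\theta_t,\theta'_t$ be the iterates of $\A$ on $S$ and $S'$ driven by the same randomness, and write $\delta_t = \|\theta_t - \theta'_t\|$ with $\delta_1 = 0$. At each step, the drawn index either avoids the differing coordinate $i$ or hits it. When it avoids $i$, both updates apply the same gradient map $\G_\A$, and since $h$ is convex and $\alpha \leq 1/\beta$, the third statement of Lemma \ref{lem:ref-xiao-approx-grad-lip} gives the non-expansive estimate $\delta_{t+1} \leq \delta_t + \alpha\eta$, where the additive $\alpha\eta$ is precisely the penalty for only approximate smoothness. When it hits $i$, the two updates use different examples, so I would \emph{not} invoke smoothness but instead bound each gradient separately by $\|\nabla h\|\leq L$ and apply the triangle inequality, yielding $\delta_{t+1} \leq \delta_t + 2\alpha L$.

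Taking expectation over the step-$t$ draw, which meets the altered sample with probability $1/n$, merges the two cases into $\E[\delta_{t+1}] \leq \E[\delta_t] + (1-\tfrac{1}{n})\alpha\eta + \tfrac{1}{n}\cdot 2\alpha L \leq \E[\delta_t] + \alpha\eta + \tfrac{2\alpha L}{n}$. Unrolling this linear recursion over $T$ steps from $\delta_1 = 0$ gives $\E[\delta_T] \leq \alpha T\eta + \tfrac{2\alpha T L}{n}$, so that in expectation the altered example contributes roughly a $T/n$ fraction of the steps. Multiplying by $L$ yields exactly $(\eta + \tfrac{2L}{n})\alpha T L$, which by Theorem \ref{thm:hardt-uniform-stable} upper-bounds $\varepsilon_{\gen}$ and establishes the claim.

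The main obstacle is conceptual rather than computational: one must cleanly separate the two regimes so that the approximation penalty $\alpha\eta$ is charged only to the non-differing steps (through the convex expansion lemma, which requires $\alpha\leq 1/\beta$ to guarantee the $1$-expansive coefficient), while the differing step is handled purely by Lipschitzness, and one must justify the per-step probability $1/n$ of encountering the altered sample. Everything downstream is a linear accumulation, tight up to the elementary bound $(1-\tfrac1n)\alpha\eta \leq \alpha\eta$. Since the result merely restates \citet{xiao2022stability}, no technique beyond the Hardt--Recht--Singer coupling argument adapted to approximate smoothness is needed.
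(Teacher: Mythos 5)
Your proposal is correct and follows essentially the same route as the paper: the paper's Appendix proof of Proposition \ref{cor:hardt-convex} uses exactly this Hardt-style coupled-trajectory argument, invoking Theorem \ref{thm:hardt-uniform-stable}, the $\alpha\eta$-approximate $1$-expansiveness from the third statement of Lemma \ref{lem:ref-xiao-approx-grad-lip} on non-differing steps, the $2\alpha L$ Lipschitz bound on the differing step with per-step probability $\frac{1}{n}$, and the same linear unrolling to $(\eta + \frac{2L}{n})\alpha T L$. Your regime separation, the role of $\alpha\leq 1/\beta$, and the final accumulation all match the paper's treatment.
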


\subsection{Data poisoning}
A poisoned algorithm $\A_{\P}$ uses the gradient on a poisoned datum 
$\nabla h(\theta, {\P}(z))$ instead of $\nabla h(\theta, z)$
which may result in totally different update trajectories dependent on ${\P}$.
Nevertheless, the expansion properties of $\G_{\A_{\P}}$ are not affected by the attack ${\P}$ at all.
Indeed, due to
\begin{align*}
    \G_{\A_{\P}}(\theta, z, \alpha) = \G_\A(\theta,{\P}(z), \alpha),
\end{align*}
if $\G_\A$ is $\iota$-approximately $\kappa$-expansive,
the poisoned update rule $\G_{\A_{\P}}$ is $\iota$-approximately $\kappa$-expansive as well.
Therefore, uniform stability analysis provides the same upper bound of $||\A_{\P}(S)-\A_{\P}(S')||$. 
Thus, Theorem \ref{thm:xiao-convex} implies the following proposition.
\begin{proposition}
\label{cor:hardt-convex}
    The poisoned generalization gap $\varepsilon_{\P}$ based on the uniform stability analysis
    remains unchanged, i.e.
    \begin{align*}
        \varepsilon_{\P} = (\eta + \frac{2L}{n})\alpha T L.
    \end{align*}
\end{proposition}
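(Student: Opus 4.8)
The plan is to reduce the proposition to two facts already recorded in this appendix: that uniform stability is distribution-free, and that the expansion coefficients of the update rule are invariant under poisoning. First I would verify the latter explicitly. Using the identity $\G_{\A_\P}(\theta, z, \alpha) = \G_\A(\theta, \P(z), \alpha)$, for any parameters $\theta_1,\theta_2$ and any $z\in\D$ one has
\[
\|\G_{\A_\P}(\theta_1, z, \alpha) - \G_{\A_\P}(\theta_2, z, \alpha)\| = \|\G_\A(\theta_1, \P(z), \alpha) - \G_\A(\theta_2, \P(z), \alpha)\|.
\]
Hence if $\G_\A$ is $\iota$-approximately $\kappa$-expansive, so is $\G_{\A_\P}$ with the same $\iota,\kappa$; ranging $z$ over $\D$ merely reindexes the range of $\P(z)$. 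In the convex regime of Theorem~\ref{thm:xiao-convex}, the third statement of Lemma~\ref{lem:ref-xiao-approx-grad-lip} gives $\kappa=1$ and $\iota=\alpha\eta$ for $\alpha\le 1/\beta$, and these values carry over to $\G_{\A_\P}$ unchanged.

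Second, I would run the uniform-stability recursion for $\A_\P$ verbatim as in the proof of Theorem~\ref{thm:xiao-convex}. For adjacent training sets $S,S'$ (differing in one element), the poisoned trajectories start from the common $\theta_1$, increase by at most $2\alpha L$ at the single step processing the differing example and by at most $\alpha\eta$ at every other step via the $1$-expansiveness plus additive term, and the resulting parameter gap is converted to a loss gap through the $L$-Lipschitzness of $h$ (which is inherited regardless of $\P$ by the first statement of Lemma~\ref{lem:h-lip-conditions}). Since both ingredients of this recursion — the coefficients $\kappa=1,\iota=\alpha\eta$ and the constant $L$ — are untouched by $\P$, it terminates at the identical value, so $\A_\P$ is $(\eta+\tfrac{2L}{n})\alpha T L$-uniformly stable.

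Finally, I would invoke Theorem~\ref{thm:hardt-uniform-stable} with the algorithm $\A_\P$ and the distribution $\P_\#\Dd$. Because the uniform-stability definition \eqref{equ:uniform-stability} takes a supremum over $z\in\D$ and quantifies over adjacent datasets $S,S'$ that are never tied to any distribution, the same stability constant bounds the generalization gap of $\A_\P$ on any distribution, in particular on $\P_\#\Dd$. Matching this against the defining inequality \eqref{equ:poisoned-generalization-gap-varepsilon-P} yields $\varepsilon_\P = (\eta+\tfrac{2L}{n})\alpha T L$.

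I do not expect a genuine obstacle; the only point requiring care is to confirm that no step of the uniform-stability chain silently imports information about $\Dd$ or $\P$. Since Theorem~\ref{thm:xiao-convex} already exhibits a bound manifestly independent of the data distribution, this verification is routine — and this very independence is precisely the content the proposition is meant to highlight, in deliberate contrast to the data-dependent bounds of Section~\ref{sec:data-dependent-stability-of-at}, whose corresponding factors \emph{do} shift under poisoning.
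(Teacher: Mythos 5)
Your argument is correct and follows essentially the same route as the paper: invariance of the $\alpha\eta$-approximate $1$-expansiveness under $\G_{\A_\P}(\theta,z,\alpha)=\G_\A(\theta,\P(z),\alpha)$, the standard two-case recursion giving $\E[\delta_{t+1}]\leq\E[\delta_t]+(\eta+\tfrac{2L}{n})\alpha$, and conversion to a loss gap via $L$-Lipschitzness and Theorem~\ref{thm:hardt-uniform-stable}. Your explicit remark that the distribution-free nature of uniform stability lets the same constant bound the gap over $\P_\#\Dd$ is a point the paper leaves implicit, but it does not change the argument.
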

Similar consequences also hold for the results in \cite{hardt2016train, xing2021algorithmic}, 
{and the results for non-convex and strongly-convex cases in \cite{xiao2022stability}}.
%



\begin{proof}
    By Theorem \ref{thm:hardt-uniform-stable} and the Lipschitz assumption,
    it suffices to prove
    \begin{align}
        \E_{\A_P}[||\A_P(S) - \A_P(S')||]\leq (\eta + \frac{2L}{n})\alpha T. \label{equ:corollary-suffice}
    \end{align}
    Assume the trajectories of $\A_P(S)$ and $\A_P(S')$ are 
    $\theta_1,\cdots,\theta_T$ and $\theta'_1,\cdots,\theta'_T$ respectively.
    Let $\delta_t = ||\theta_t - \theta'_t||$.
    By the third statement in Lemma \ref{lem:ref-xiao-approx-grad-lip}, 
    the update rule $\G_\A$ is $\alpha \eta$-approximately $1$-expansive.
    Since we have
    \begin{align*}
        \G_{\A_P}(\theta, z, \alpha) = \G_\A(\theta,P(z), \alpha),
    \end{align*}
    the update rule $\G_{\A_P}$ is $\alpha \eta$-approximately $1$-expansive as well due to the Definition \ref{def:expansion}.
    Note that at step $t$ the algorithm $\A_P$ selects the example that $S$ and $S'$ differ with probability $\frac{1}{n}$.
    In this case, 
    \begin{align*}
        \delta_{t+1} \leq \delta_t + 2\alpha L.
    \end{align*}
    In the other case,
    \begin{align*}
        \delta_{t+1} \leq \delta_t + \alpha \eta.
    \end{align*}
    It follows that
    \begin{align*}
        \E_{\A_P}[\delta_{t+1}]
        &\leq \frac{1}{n}(\E_{\A_P}[\delta_t] + 2\alpha L) 
        + (1-\frac{1}{n})(\E_{\A_P}[\delta_t] + \alpha \eta)\\
        &\leq \E_{\A_P}[\delta_t] + (\eta + \frac{2 L}{n})\alpha.
    \end{align*}
    Therefore, Equation~\eqref{equ:corollary-suffice} follows.
\end{proof}

\section{Experiments Setups}\label{app:additional-experiments}

We conduct experiments by adversarially training 
a ResNet-18 \cite{he2016deep} on common datasets and their poisoned counterparts under different data poisoning attacks. 

\subsection{Data Augmentation}
For CIFAR-10 and CIFAR-100, we perform RandomHorizontalFlip,
RandomCrop(32, 4) on the training set 
and Normalize on both the training set and test set. 
For Tiny-ImageNet, we perform RandomHorizontalFlip on the training set 
and Normalize on both the training set and test set.
For SVHN, we perform only Normalize on both the training set and test set.

\subsection{Adversarial Training}
We use the cross entropy loss as the original loss $l$.
We adopt the 10-step projection gradient descent (PGD-10) \cite{madry2017towards} to generate adversarial examples.
The adversarial budget is $\epsilon$ and the step size is $\epsilon/4$ in $L_\infty$-norm.  
We report robust accuracy as the ratio of correctly classified adversarial examples generated by PGD-10,
and the robust generalization gap 
as the gap between robust training accuracy 
and robust test accuracy.

We use the SGD optimizer in PyTorch 
and set the momentum and weight decay to be $0.9$ and $5\times 10^{-4}$ respectively.
For all four data sets,
the batch sizes of data loaders are set to 128.    
In Figure \ref{subfig:svhn-budget} and \ref{subfig:cifar10-budget},
to illustrate the robust overfitting phenomenon, 
we run AT for 200 epochs with an initial learning rate of 0.1 
that decays by a factor of 0.1 at the 100 and 150 epochs.
In other experiments, 
we adopt the constant learning rate of $0.01$.
We run AT for 50 epochs on SVHN 
and for 200 epochs on CIFAR-10, CIFAR-100 and Tiny-ImageNet.   

\subsection{Poisoning Details}
We introduce different poisoning attacks used in our experiments on CIFAR-10 and CIFAR-100. 
In order to simulate the poisoned distribution $\P_\#\Dd$, 
we generate the poisoned training set and test set simultaneously.

\textbf{EM (error minimizing noise).}
\cite{huang2021unlearnable} proposed a min-min bi-level optimization 
to generate error-minimizing noises on the training set. 
Such noises prevent deep learning models from learning information about the clean distribution from the poisoned training data. 
Formally:
\begin{align*}
    \min_{\theta}\frac{1}{n}\sum_{i=1}^{n} \min_{||\delta_i||\le \epsilon'}l(f_\theta(x_i+\delta_i), y_i)),
\end{align*}
where $\epsilon'$ is the poisoning budget.
The trained noise generator $f_\theta$ generates 
an unlearnable example $(x_{em}, y)$ with respect to the clean datum $(x, y)$ such that $x_{em} = x + \arg\min_{||\delta_i||\le \epsilon'}l(f_{\theta}(x+\delta), y)$.
PGD-10 is employed for solving the minimization problem. 
It is worth noting that in our experiments, we combine the training set and test set together to train noise generator $f_\theta$ in order to obtain the poisoned training set and poisoned test set coming from the same shifted distribution.

\textbf{REM (robust error minimizing noise).}
\cite{fu2022robust} further proposed robust minimizing noise 
in order to protect data from adversarial training, 
which also can degrade the test robustness. 
Formally:
\begin{align*}
    \min_{\theta} \frac{1}{n} \sum_{i=1}^n \min_{||\delta_i^{u}|| \le \epsilon'} \max_{||\delta_i^{a}|| \le \rho_a}l(f_{\theta}(x+\delta_i^{u}+ \delta_i^{a}), y),
\end{align*}
where $\epsilon'$ and $\rho_a$ are poisoning budget 
and adversarial perturbation budget.
$\rho_a$ controls the protection level against adversarial training.
For REM, we set  $\rho_a=2/255$.
The trained noise generator $f_\theta$ generates 
a robust unlearnable example $(x_{rem}, y)$ with respect to the clean datum $(x, y)$ such that
$x_{rem} = x + \arg\min_{||\delta_i^{u}|| \le \epsilon'} \max_{||\delta_i^{a}|| \le \rho_a}l(f_{\theta}(x+\delta^{u}+ \delta^{a}), y)$. It is worth noting that in our experiments, we combine the training set and test set together to train noise generator $f_\theta$ in order to obtain the poisoned training set and poisoned test set coming from the same shifted distribution.

Following \cite{fu2022robust}, the source model is trained with SGD for 5000 iterations, 
with batch size of 128,  momentum of 0.9, weight decay of $5\times 10^{-4}$, 
an initial learning rate of 0.1, 
and a learning rate scheduler that decays the learning rate by a factor of 0.1 every 2000 iterations. 
The inner minimization and maximization use PGD-10 to approximate. 
For EOT, the data transformation T is set as the data augmentation of the corresponding data set, 
and the repeated sampling number for expectation estimation is set as 5.

\textbf{ADV (adversarial perturbation).}
\cite{tao2021better} and \cite{fowl2021adversarial} both proposed that 
adding adversarial perturbations to the training data 
is effective to degrade the test performance of a naturally trained model. 
In our experiments, we follow their class-targeted adversarial attack. 
Let $K$ be the number of data classes.
We choose fixed target permutation $t = (y + 1) \mod K$ according to source label $y$. 
Then add a small adversarial perturbation to $x$ in order to force a naturally trained model to classify it as the wrong label $t$. 
Formally:
\begin{align*}
    x_{adv} = \arg\min_{||\delta|| \le \epsilon'} l(f_\theta(x+\delta), t),
\end{align*}
where $f_\theta$ is a classifier naturally trained on the combination of the training set and test set, 
and $\epsilon'$ is the poisoning budget.
For the minimization problem,
we adopt PGD-100 which is enough to generate strong poisons. 

\textbf{HYP (hypocritical perturbation).}
\cite{tao2022can} proposed adding hypocritical perturbation on training data to degrade the test robustness of an adversarially trained model. 
Before generating the poisons, 
a crafting model is adversarially trained with a crafting budget $\epsilon=2/255$ for 10 epochs. 
Then generate hypocritical noises within the perturbation budget $\epsilon'$ 
which can mislead the learner by reinforcing the non-robust features. 
Formally:
\begin{align*}
    x_{hyp} = \arg \min_{||\delta|| \le\epsilon'}l(f_\theta(x+\delta), y),
\end{align*}
where $f_\theta$ is the crafting model. 
Like in ADV, we choose PGD-100 to solve the minimization problem. 
It is worth mentioning that we trained the crafting model on the combination of the training set and test set.

\textbf{RAN (class-wise random perturbation).}
In our experiment, we generate a random perturbation $p_y \in B(0, \epsilon')$ for each label $y$ according to the uniform distribution. 
Then we have poisoned pairs $(x+p_y, y)$. 
It is important that we choose the same class-wise random perturbation for the training set and test set.

\end{document}